\documentclass{article} 
\usepackage{iclr2021_conference,times}


\usepackage{amsmath,amsfonts,bm}









\def\eqref#1{equation~\ref{#1}}









\def\1{\bm{1}}










\DeclareMathAlphabet{\mathsfit}{\encodingdefault}{\sfdefault}{m}{sl}
\SetMathAlphabet{\mathsfit}{bold}{\encodingdefault}{\sfdefault}{bx}{n}











\newcommand{\E}{\mathbb{E}}



\usepackage[utf8]{inputenc} 
\usepackage[T1]{fontenc}    
\usepackage{hyperref}       
\usepackage{url}            
\usepackage{booktabs}       
\usepackage{amsfonts}       
\usepackage{nicefrac}       
\usepackage{microtype}      
\usepackage{amsmath, amsthm, amssymb}
\usepackage{bm}
\usepackage{graphicx}
\usepackage{subfigure}
\usepackage{diagbox}
\usepackage{multirow}
\usepackage{xcolor}
\usepackage{mathtools}
\usepackage{wrapfig}
\usepackage[ruled,vlined]{algorithm2e}
\usepackage{algorithmic}
\usepackage{caption}
\usepackage{booktabs}

\newcommand{\agar}{{\textit{Agar.io}}}
\newcommand{\algo}{{RPG}}
\newcommand{\stag}{\textrm{S}}
\newcommand{\hare}{\textrm{H}}

\newcommand{\yw}[1]{\textcolor{red}{(YW: #1)}}

\newcommand{\reminder}[1]{#1}

\newtheorem{theorem}{Theorem}

\renewcommand{\cite}{\citep}

\title{Discovering Diverse Multi-Agent Strategic Behavior via Reward Randomization}

\iclrfinalcopy
\author{Zhenggang Tang\thanks{$^{\dag}$ Work done as an intern at Institute for Interdisciplinary Information Sciences (IIIS), Tsinghua University.} $^{\,16\dag}$,  
Chao Yu$^{*1\sharp}$,  
Boyuan Chen$^{3}$,  
Huazhe Xu$^{3}$, 
Xiaolong Wang$^{4}$,
\\
\textbf{Fei Fang$^{5}$, 
Simon Du$^{7}$, 
Yu Wang$^{1}$, 
Yi Wu$^{12\natural}$} \\
$^1$ Tsinghua University, $^2$ Shanghai Qi Zhi Institute,  $^3$ UC Berkeley, $^4$ UCSD, $^5$ CMU, \\
$^6$ Peking University, $^7$ University of Washington\\ [0.5ex]
$^*$Equal contribution, \texttt{$^{\sharp}$\texttt{zoeyuchao@gmail.com}, $^{\natural}$\texttt{jxwuyi@gmail.com}} \\
}

%

\begin{document}

\maketitle

\vspace{-5mm}
\begin{abstract}
We propose a simple, general and effective technique, \emph{Reward Randomization} for discovering diverse strategic policies in complex multi-agent games. Combining reward randomization and policy gradient, we derive a new algorithm, \emph{Reward-Randomized Policy Gradient ({\algo})}. {\algo} is able to discover multiple distinctive human-interpretable strategies in challenging temporal trust dilemmas, including grid-world games and a real-world game {\agar}, where multiple equilibria exist but standard multi-agent policy gradient algorithms always converge to a fixed one with a sub-optimal payoff for every player even using state-of-the-art exploration techniques. Furthermore, with the set of diverse strategies from {\algo}, we can (1) achieve higher 
payoffs by fine-tuning the best policy from the set; and (2) obtain an adaptive agent by using this set of strategies as its training opponents. The source code and example videos can be found in our website: \footnotesize{\url{https://sites.google.com/view/staghuntrpg}}.
\vspace{-2mm}
\end{abstract}

\vspace{-2mm}
\section{Introduction}\label{sec:intro}
\vspace{-1mm}
Games have been a long-standing benchmark for artificial intelligence, which prompts persistent technical advances towards our ultimate goal of building intelligent agents like humans, from Shannon's initial interest in Chess~\citep{shannon1950xxii} and IBM DeepBlue~\cite{campbell2002deep}, 
to the most recent deep reinforcement learning breakthroughs in Go~\cite{silver2017mastering}, Dota II~\cite{openai2019dota} and Starcraft~\cite{vinyals2019grandmaster}. 
Hence, analyzing and understanding the challenges in various games also become critical for developing new learning algorithms for even harder challenges. 

Most recent successes in games are based on decentralized multi-agent learning~\cite{brown1951iterative,singh2000nash,lowe2017multi,silver2018general}, 
where agents compete against each other and optimize their own rewards 
to gradually improve their strategies. 
In this framework, Nash Equilibrium (NE)~\cite{nash1951non}, where no player could benefit from altering its strategy unilaterally,  provides a general solution concept and serves as a goal for policy learning and has attracted increasingly significant interests from AI researchers~\cite{heinrich2016deep,lanctot2017unified,foerster2018learning,kamra2019deep,han2019deep,bai2020provable,perolat2020poincar}: many existing works studied how to design practical multi-agent reinforcement learning  (MARL) algorithms that can provably converge to an NE in Markov games, particularly in the zero-sum setting.

Despite the empirical success of these algorithms, a fundamental question remains largely unstudied in the field: even if an MARL algorithm converges to an NE, \emph{\textbf{which} equilibrium will it converge to?} The existence of multiple NEs is extremely common in many multi-agent games. Discovering as many NE strategies as possible is particularly important in practice not only because different NEs can produce drastically different payoffs but also because when facing unknown players who are trained to play an NE strategy, we can gain advantage by identifying which NE strategy the opponent is playing and choosing the most appropriate response. 
Unfortunately, in many games where multiple distinct NEs exist, the popular decentralized policy gradient algorithm (PG), which has led to great successes in numerous games including Dota II and Stacraft, always converge to a particular NE with non-optimal payoffs and fail to explore more diverse modes in the strategy space. 

Consider an extremely simple example, a 2-by-2 matrix game \emph{Stag-Hunt}~\cite{rousseau1984discourse,skyrms2004stag}, where two pure strategy NEs exist: a ``risky'' cooperative equilibrium with the highest payoff for both agents and a ``safe'' non-cooperative equilibrium with \reminder{strictly lower payoffs}. 
We show, from both theoretical and practical perspectives, that even in this simple matrix-form  game, PG fails to discover the high-payoff ``risky'' NE with high probability. 
\reminder{The intuition is that the neighborhood that makes policies converge to the ``risky'' NE can be substantially small comparing to the entire policy space. Therefore, an exponentially large number of exploration steps are needed to ensure PG discovers the desired mode. }  
We propose a simple technique, \emph{Reward Randomization} (RR),
\begin{wrapfigure}{r}{0.45\textwidth}
\vspace{-3mm}
\includegraphics[width=0.45\textwidth]{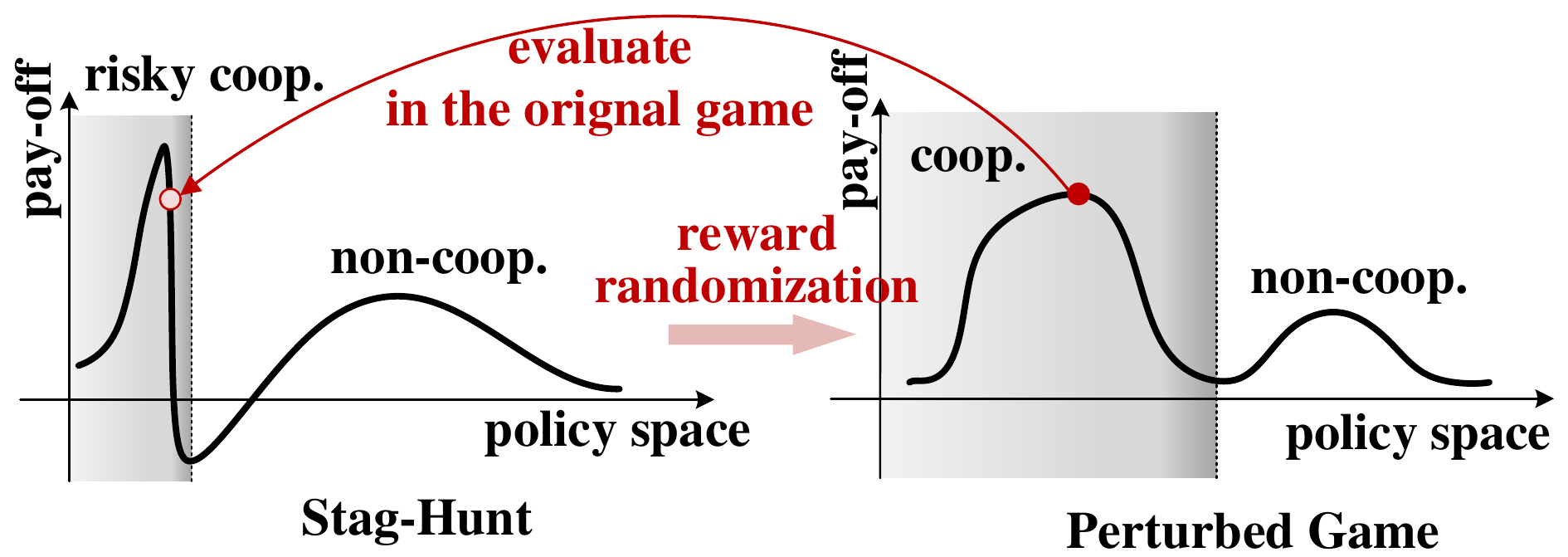}
\vspace{-6mm}
\caption{Intuition of Reward Randomization}\label{fig:rpg-intro}
\vspace{-3mm}
\end{wrapfigure}
 which can help PG discover the ``risky'' cooperation strategy in the stag-hunt game with theoretical guarantees. 
\reminder{The core idea of RR is to directly perturb the reward structure of the multi-agent game of interest, which is typically low-dimensional. RR directly alters the landscape of different strategy modes in the policy space and therefore makes it possible to easily discover novel behavior in the perturbed game (Fig.~\ref{fig:rpg-intro}). We call this new PG variant \emph{Reward-Randomized Policy Gradient} ({\algo}).}



To further illustrate the effectiveness of {\algo}, we introduce three Markov games -- two gridworld games and a real-world online game \href{http://agar.io}{\agar}. 
All these games have multiple NEs including both ``risky'' cooperation strategies and ``safe'' non-cooperative strategies. We empirically show that even with state-of-the-art exploration techniques, PG fails to discover the ``risky'' cooperation strategies. 
In contrast, {\algo} discovers a surprisingly \emph{diverse} set of human-interpretable strategies in all these games, including some non-trivial emergent behavior. Importantly, among this set are policies achieving much higher payoffs for each player compared to those found by PG. 
This ``diversity-seeking'' property of {\algo} also makes it feasible to build \emph{adaptive policies}:  by re-training an RL agent against the diverse opponents discovered by {\algo}, the agent is able to dynamically alter its strategy \reminder{between different modes}, e.g., either cooperate or compete,  w.r.t. 
its test-time opponent's behavior.

We summarize our contributions as follow
\begin{itemize}
    \item We studied \emph{a collection of challenging multi-agent games}, where the popular multi-agent PG algorithm always converges to a sub-optimal equilibrium strategy with low payoffs.
    \item \emph{A novel reward-space exploration technique}, reward randomization (RR), for discovering hard-to-find equilibrium with high payoffs. Both theoretical and empirical results show that reward randomization substantially outperforms classical policy/action-space exploration techniques in challenging trust dilemmas. 
    \item We empirically show that RR \emph{discovers surprisingly diverse strategic behaviors in complex Markov games}, which further provides a practical solution for building an adaptive agent.
    \item \emph{A new multi-agent environment {\agar}}, which allows complex multi-agent strategic behavior. We released the environment to the community as a novel testbed for MARL research.
\end{itemize}
\vspace{-2mm}

\vspace{-1mm}
\section{A Motivating Example: Stag Hunt}\label{sec:background}
\vspace{-1mm}
\begin{wraptable}{r}{0.24\textwidth}
    \vspace{-5mm}
    \centering
    \begin{tabular}{c|c|c|}
         & Stag & Hare \\ \hline
         Stag & $a$, $a$ & $c$, $b$ \\   \hline
         Hare & $b$, $c$ & $d$, $d$\\
         \hline
    \end{tabular}
    \vspace{-2mm}
    \caption{The stag-hunt game, $a>b\ge d>c$.}
    \label{tab:matrix-game}
    \vspace{-4mm}
\end{wraptable}

We start by analyzing a simple problem: finding the NE with the optimal payoffs in the \emph{Stag Hunt} game. This game was originally introduced in Rousseau's work, ``\emph{A discourse on inequality}''~\cite{rousseau1984discourse}: a group of hunters are tracking a big stag silently; now a hare shows up, each hunter should decide whether to keep tracking the stag or kill the hare immediately. This leads to the 2-by-2 matrix-form stag-hunt game in Tab.~\ref{tab:matrix-game} with two actions for each agent, \emph{Stag} (S) and \emph{Hare} (H).
There are two pure strategy NEs: the Stag NE, where both agents choose S and receive a high payoff $a$ (e.g., $a=4$), and the Hare NE, where both agents choose H and receive a lower payoff $d$ (e.g., $d=1$). 
The Stag NE is ``risky'' because if one agent defects, they still receives a decent reward $b$ (e.g., $b=3$) for eating the hare alone while the other agent with an S action may suffer from a big loss $c$ for being hungry (e.g., $c=-10$).

Formally, let $A=\{\stag,\hare\}$ denote the action space, $\pi_i(\theta_i)$ denote the policy for agent $i$ ($i\in\{1,2\}$) parameterized by $\theta_i$, i.e., $P[\pi_i(\theta_i)=\stag]=\theta_i$ and $P[\pi_i(\theta_i)=\hare]=1-\theta_i$, and $R(a_1,a_2;i)$ 
denote the payoff for agent $i$ when agent 1 takes action $a_1$ and agent 2 takes action $a_2$. Each agent $i$ optimizes its expected utility $U_i(\pi_1,\pi_2)=\E_{a_1\sim\pi_1,a_2\sim\pi_2}\left[R(a_1,a_2;i)\right]$. Using the standard policy gradient algorithm, a typical learning procedure is to repeatedly take the following two steps until convergence\footnote{In general matrix games beyond stag hunt, the procedure can be cyclic as well~\cite{singh2000nash}.}:
(1) estimate gradient $\nabla_i=\nabla U_i(\pi_1,\pi_2)$ via self-play;  (2) update the policies by $\theta_i\gets \theta_i + \alpha\nabla_i$ with learning rate $\alpha$.
Although PG is widely used in practice, the following theorem shows in certain scenarios, unfortunately, the probability that PG converges to the Stag NE is low. 

\begin{theorem}
	\label{thm:fail}
Suppose $a-b=\epsilon(d-c)$ for some $0 < \epsilon < 1$ and initialize $\theta_1,\theta_2 \sim \mathrm{Unif}\left[0,1\right]$.
Then the probability that PG discovers the high-payoff NE is upper bounded by $\frac{2\epsilon+\epsilon^2}{1+2\epsilon+\epsilon^2}$.
\end{theorem}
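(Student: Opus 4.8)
The plan is to analyze the policy gradient dynamics directly on the two-dimensional parameter space $(\theta_1,\theta_2)\in[0,1]^2$ and to exhibit an explicit region of initializations that is trapped away from the Stag NE at $(1,1)$. The key observation will be that the sign of each agent's gradient depends only on the \emph{other} agent's parameter, crossing zero at a single threshold determined by $\epsilon$.

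First I would write out the expected utility $U_1(\theta_1,\theta_2)=a\theta_1\theta_2+c\theta_1(1-\theta_2)+b(1-\theta_1)\theta_2+d(1-\theta_1)(1-\theta_2)$ and compute the gradient component
\[
\nabla_1=\frac{\partial U_1}{\partial\theta_1}=(a-b)\theta_2-(d-c)(1-\theta_2).
\]
Substituting $a-b=\epsilon(d-c)$ and using $d-c>0$ gives $\nabla_1=(d-c)\big[(1+\epsilon)\theta_2-1\big]$, so that $\nabla_1>0$ exactly when $\theta_2>\tfrac{1}{1+\epsilon}$ and $\nabla_1<0$ when $\theta_2<\tfrac{1}{1+\epsilon}$. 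By the symmetry of the game the analogous statement holds for $\nabla_2$ with $\theta_1,\theta_2$ swapped; in particular the single threshold $t:=\tfrac{1}{1+\epsilon}$ governs both coordinates.

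Next I would define the ``safe'' square $B=[0,t)\times[0,t)$ and argue it is forward-invariant under the update $\theta_i\gets\theta_i+\alpha\nabla_i$. On $B$ both $\theta_1<t$ and $\theta_2<t$, hence $\nabla_1<0$ and $\nabla_2<0$, so each coordinate strictly decreases and the iterate moves deeper into $B$ while staying in $[0,1]^2$. Consequently a trajectory started in $B$ remains in $B$ for all time, both coordinates decrease monotonically, and the dynamics converge to the Hare NE at $(0,0)$ rather than the Stag NE; thus no initialization in $B$ discovers the high-payoff NE.

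Finally, since $(\theta_1,\theta_2)$ is drawn uniformly from $[0,1]^2$, the probability of \emph{failing} to reach the Stag NE is at least the area of $B$, namely $t^2=\tfrac{1}{(1+\epsilon)^2}$, whence the probability of discovering the Stag NE is at most $1-\tfrac{1}{(1+\epsilon)^2}=\tfrac{2\epsilon+\epsilon^2}{1+2\epsilon+\epsilon^2}$, as claimed. The one genuinely delicate point is the forward-invariance argument: I must ensure the discrete update cannot jump out of $B$. This is immediate here because in $B$ both gradients push the coordinates toward $0$, so the only thing to check is that a step does not overshoot below $0$, which still leaves the iterate inside $B$ (and one may alternatively assume a sufficiently small learning rate or pass to the continuous-time gradient flow). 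Everything else reduces to a short sign computation together with an area calculation.
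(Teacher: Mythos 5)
Your proposal is correct and follows essentially the same route as the paper's proof: the same gradient computation yielding the threshold $\tfrac{1}{1+\epsilon}$, the same bad region $\left[0,\tfrac{1}{1+\epsilon}\right)^2$ where both coordinates can only decrease, and the same area calculation giving the bound $1-\tfrac{1}{(1+\epsilon)^2}=\tfrac{2\epsilon+\epsilon^2}{1+2\epsilon+\epsilon^2}$. Your treatment is in fact slightly more careful than the paper's, since you explicitly establish forward invariance of the bad square under the projected update rather than merely asserting that initializations with both gradients negative cannot converge to the Stag NE.
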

Theorem 1 shows when the risk is high (i.e., $c$ is low), then the probability of finding the Stag NE via PG is very low.
Note this theorem applies to random initialization, which is standard in RL.

\textbf{Remark:} 
\emph{
One needs at least $N = \Omega\left(\frac{1}{\epsilon}\right)$ restarts to ensure a constant success probability.
}

\begin{wrapfigure}{r}{0.25\textwidth}
\vspace{-5mm}
\includegraphics[width=0.24\textwidth]{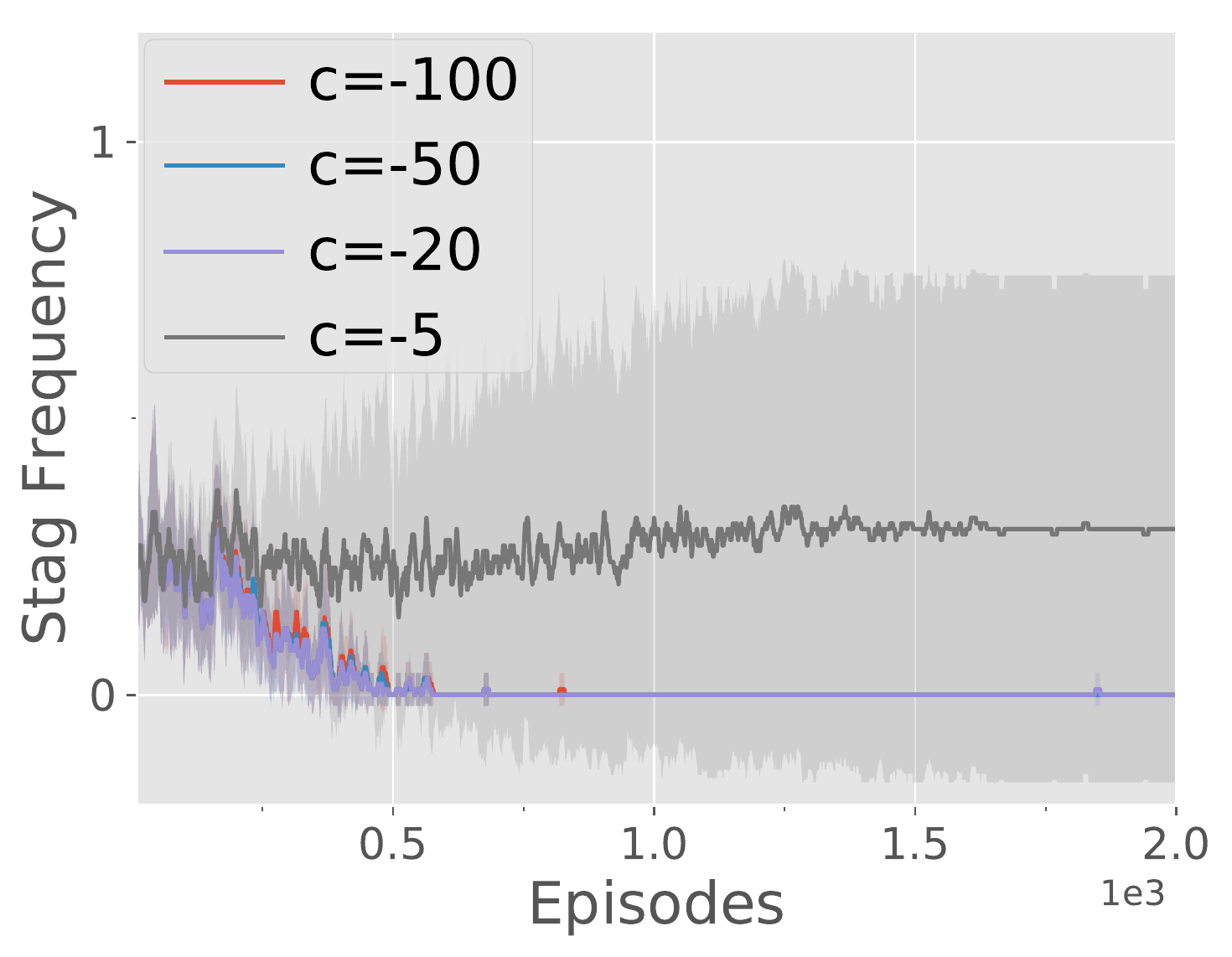}
\vspace{-3mm}
\caption{PPO in stag hunt, with $a$=4, $b$=3, $d$=1 and various $c$ (10 seeds).}\label{fig:ppo-stag-hunt}
\vspace{-4mm}
\end{wrapfigure}

Fig.~\ref{fig:ppo-stag-hunt} shows empirical studies: 
we select 4 value assignments, i.e., $c\in\{-5,-20,-50,-100\}$ and $a$=4, $b$=3, $d$=1, and run a state-of-the-art PG method, proximal policy optimization (PPO)~\cite{schulman2017proximal}, on these games. The Stag NE is rarely reached, and, as $c$ becomes smaller, the probability of finding the Stag NE significantly decreases. 
\citet{peysakhovich2018prosocial} provided a theorem of similar flavor without analyzing the dynamics of the learning algorithm 
whereas we explicitly characterize the behavior of PG.  
They studied a prosocial reward-sharing scheme, which transforms the reward of both agents to $R(a_1,a_2;1)+R(a_1,a_2;2)$. 
Reward sharing can be viewed as a special case of our method and, as shown in Sec.~\ref{sec:expr}, it is insufficient for solving complex temporal games. 


\vspace{-1mm}
\subsection{Reward Randomization in the Matrix-Form Stag-Hunt Game}
\vspace{-1mm}9
Thm.~\ref{thm:fail} suggests that the utility function $R$ highly influences what strategy PG might learn. Taking one step further, even if a strategy is difficult to learn with a particular $R$, it might be easier in some other function $R'$. Hence, if we can define an appropriate space $\mathcal{R}$ over different utility functions and \reminder{draw samples from $\mathcal{R}$}, 
we may possibly discover desired novel strategies by \reminder{running PG on some sampled utility function $R'$} and evaluating the obtained policy profile on the original game with $R$. \reminder{We call this procedure \emph{Reward Randomization} (RR)}. 

Concretely, in the stag-hunt game, $R$ is parameterized by 4 variables $(a_R,b_R,c_R,d_R)$. 
We can define a distribution over $\mathbb{R}^4$, draw a tuple $R'=(a_{R'},b_{R'},c_{R'},d_{R'})$ from this distribution, and run PG on $R'$.
Denote the original stag-hunt game where the Stag NE is hard to discover as $R_0$. \reminder{Reward randomization draws $N$ perturbed tuples $R_1,\ldots,R_N$, 
runs PG on each $R_i$, and evaluates each of the obtained strategies on $R_0$.}
The theorem below shows it is highly likely that the population of the $N$ policy profiles obtained from the perturbed games contains the Stag NE strategy. 

\begin{theorem}
	\label{thm:success}
For \textbf{any} Stag-Hunt game, suppose in the $i$-th run of RR we randomly generate $a_{R_i},b_{R_i},c_{R_i},d_{R_i} \sim \mathrm{Unif}\left[-1,1\right]$ and initialize $\theta_1,\theta_2 \sim \mathrm{Unif}\left[0,1\right]$, then with probability at least $1-0.6^N = 1-\exp\left(-\Omega\left(N\right)\right)$, the aforementioned RR procedure discovers the high-payoff NE.
\end{theorem}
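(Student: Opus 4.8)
The plan is to reduce the claim to a single, $R_0$-independent per-run computation and then invoke independence across the $N$ runs. For a sampled game with payoffs $(a_R,b_R,c_R,d_R)$, exactly as in Section~\ref{sec:background} one gets the (affine-in-$\theta$) gradients $\nabla_1=\theta_2(a_R-b_R)+(1-\theta_2)(c_R-d_R)$ and $\nabla_2=\theta_1(a_R-b_R)+(1-\theta_1)(c_R-d_R)$. The target is the profile $(\theta_1,\theta_2)=(1,1)$, which is precisely the Stag NE of $R_0$ no matter what the fixed payoffs of $R_0$ are. Since the tuples $R_1,\dots,R_N$ and the initializations are drawn independently of $R_0$, the probability $p$ that one run of PG converges to $(1,1)$ is a universal constant not depending on $R_0$. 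Hence it suffices to bound $p$ below by a positive constant: then the probability that all $N$ independent runs miss $(1,1)$ is $(1-p)^N=\exp(-\Omega(N))$, the stated form. This universality is the conceptual heart of the statement, turning the $R_0$-dependent, possibly tiny success probability of Theorem~\ref{thm:fail} into an $R_0$-free constant.

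To compute $p$ I would split on the signs of $a_R-b_R$ and $c_R-d_R$. Because $a_R,b_R,c_R,d_R$ are i.i.d.\ $\mathrm{Unif}[-1,1]$, the two differences are independent, each symmetric about $0$ (a triangular law on $[-2,2]$), so each is positive with probability $\tfrac12$. First, $(1,1)$ can attract the PG flow only when $a_R>b_R$: otherwise $\nabla_1<0$ at $\theta_2=1$, so Stag is repelling; this discards the two sign patterns with $a_R<b_R$ (total probability $\tfrac12$), which contribute $0$. When $a_R>b_R$ and $c_R>d_R$ (probability $\tfrac14$), both gradients are convex combinations of positive numbers, hence positive on all of $[0,1]^2$: Stag strictly dominates and \emph{every} initialization converges to $(1,1)$, contributing $\tfrac14\cdot 1$.

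The crux is the remaining Stag-hunt pattern $a_R>b_R$, $c_R<d_R$ (probability $\tfrac14$), the only case in which convergence is initialization-dependent. Here the gradient field is affine, so the continuous-time PG dynamics is a linear ODE whose unique interior rest point is the symmetric mixed equilibrium $(\theta^\star,\theta^\star)$ with $\theta^\star=\frac{d_R-c_R}{(a_R-b_R)+(d_R-c_R)}\in(0,1)$. Diagonalizing the constant Jacobian (zero diagonal, off-diagonal entry $r=(a_R-b_R)+(d_R-c_R)>0$) shows this is a saddle whose stable eigendirection is $(1,-1)$; its stable manifold is therefore the straight anti-diagonal line $\theta_1+\theta_2=2\theta^\star$, so the basin of attraction of $(1,1)$ is exactly the half-square $\{\theta_1+\theta_2>2\theta^\star\}\cap[0,1]^2$ (one checks the clamping of each $\theta_i$ to $[0,1]$ does not change which corner a trajectory reaches). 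Its area $A(\theta^\star)$ is elementary, and under the swap $(a_R-b_R)\leftrightarrow(d_R-c_R)$, which sends $\theta^\star\mapsto 1-\theta^\star$ and $A(\theta^\star)\mapsto 1-A(\theta^\star)$, the expected basin area is exactly $\tfrac12$. Thus this pattern contributes $\tfrac14\cdot\tfrac12$ and $p=\tfrac14+\tfrac18=\tfrac38$.

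Combining, the failure probability is $(1-p)^N=\exp(-\Omega(N))$, giving the $1-\exp(-\Omega(N))$ guarantee for every Stag-Hunt $R_0$. The main obstacle is exactly the Stag-hunt case: correctly identifying its basin of attraction. The key simplification I would exploit is that, under the direct $[0,1]$ policy parameterization, the gradients are affine in $\theta$, so the separatrix is a genuine straight line through the saddle rather than a curved manifold; this both makes the basin area explicit and makes the expected-area symmetry $\E[A]=\tfrac12$ transparent. (A cruder sufficient region, for instance asking only that a single coordinate exceed $\theta^\star$, overestimates the basin and can produce a slightly smaller exponent base such as the one quoted; the half-plane computation is the faithful one and already delivers geometric decay.)
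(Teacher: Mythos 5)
Your proposal is correct in substance and is in fact \emph{more} careful than the paper's own argument, which proceeds by a very different and much cruder route. The paper's proof is a two-line union bound: it asserts that a run can fail only if at initialization $(a+d-b-c)\theta_1+c-d<0$ or $(a+d-b-c)\theta_2+c-d<0$, verifies (essentially numerically, over the sampling scheme for $a,b,c,d,\theta_1,\theta_2$) that this event has probability at most $0.6$, and multiplies across the $N$ independent runs. You instead compute the exact per-run success probability via a full phase-portrait analysis: sign-splitting on the independent, symmetric differences $a_R-b_R$ and $c_R-d_R$; noting the dominated patterns contribute $\tfrac14\cdot 1$ and $0$; and, in the genuine stag-hunt pattern, identifying the interior saddle at $(\theta^\star,\theta^\star)$, its straight stable manifold $\theta_1+\theta_2=2\theta^\star$ (straight precisely because the field is affine), the resulting half-square basin of $(1,1)$, and the swap symmetry giving expected basin area exactly $\tfrac12$. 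This yields $p=\tfrac14+\tfrac18=\tfrac38$ and failure probability $\left(\tfrac58\right)^N$. What your approach buys is an exact constant and a complete description of where each run converges; what the paper's buys is brevity---at a cost, as discussed next. (One small step you leave implicit but which is immediate: when some run reaches $(1,1)$, the evaluation phase does select it, since $U_1(1,1)=a$ is the maximum achievable payoff in $R_0$.)

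The one discrepancy is the base: you prove $1-(0.625)^N$, not the quoted $1-(0.6)^N$, and here your closing diagnosis is exactly right and cuts \emph{against} the paper. In the anti-coordination pattern $a_R<b_R$, $c_R>d_R$ (your case with a saddle whose unstable direction is $(1,-1)$), both initial gradients can be positive --- namely when $\theta_1,\theta_2<\theta^\star$ --- yet the flow generically converges to $(1,0)$ or $(0,1)$, never to $(1,1)$ (indeed $(1,1)$ is repelling there since $a_R<b_R$). So the paper's ``necessary condition'' for failure is not actually necessary, and its union-bound calculation credits as successes a set of runs of mass about $\tfrac14\,\E\!\left[(\theta^\star)^2\right]\approx 0.075$ that in fact fail; this is precisely the gap between its $0.6$ and the exact $\tfrac58=0.625$. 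The substantive claim $1-\exp\left(-\Omega(N)\right)$, which is what both arguments deliver, is untouched, but the literal constant $0.6^N$ in the statement is not supported by a correct accounting, whereas your $\tfrac58$ is the faithful one. The only idealization you share with the paper (its proof of Theorem~\ref{thm:fail} makes the same one) is treating PG as the continuous-time/small-step gradient flow with projection onto $[0,1]^2$, and you correctly flag and handle the clamping check that this requires.
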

Here we use the uniform distribution as an example. 
Other distributions may also help in practice. 
Comparing Thm.~\ref{thm:success} and Thm.~\ref{thm:fail}, RR significantly improves standard PG w.r.t. success probability. 

\textbf{Remark 1:} \emph{
For the scenario studied in Thm.~\ref{thm:fail}, to achieve a $(1-\delta)$ success probability for some $0 < \delta < 1$, PG requires \textbf{at least} $N= \Omega\left(\frac{1}{\epsilon}\log\left(\frac{1}{\delta}\right)\right)$ random restarts.
For the same scenario, RR only requires to repeat \textbf{at most} $N = O\left(\log\left(1/\delta\right)\right)$ which is independent of $\epsilon$.
When $\epsilon$ is small, this is a huge improvement.
}

\textbf{Remark 2: }
\emph{Thm.~\ref{thm:success} suggests that comparing with \emph{policy} randomization, perturbing the \emph{payoff matrix} makes it substantially easier to discover a strategy that can be hardly reached in the original game.}
	
Note that although in Stag Hunt, we particularly focus on the Stag NE that has the highest payoff for both agents, in general RR can also be applied to NE selection in other matrix-form games using a payoff evaluation function $E(\pi_1,\pi_2)$. For example, \reminder{we can set $E(\pi_1,\pi_2)=U_1(\pi_1,\pi_2)+U_2(\pi_1,\pi_2)$ for a prosocial NE, or look for Pareto-optimal NEs by setting $E(\pi_1,\pi_2)=\beta U_1(\pi_1,\pi_2)+(1-\beta)U_2(\pi_1,\pi_2)$ with $0\le \beta\le 1$.} 




\vspace{-1mm}
\section{{\algo}: Reward-Randomized Policy Gradient}\label{sec:general}
\vspace{-1mm}



Herein, we extend \emph{Reward Randomization} to general multi-agent Markov games.
We now utilize RL terminologies and consider the 2-player setting \reminder{for simplicity}. 
Extension to more agents is straightforward (Appx.~\ref{app:more-agents}). 

Consider a 2-agent Markov game $M$ defined by  $(\mathcal{S},\mathcal{O},\mathcal{A},R,P)$, where $\mathcal{S}$ is the state space; $\mathcal{O}=\{o_i:s\in\mathcal{S}, o_i=O(s,i), i\in\{1,2\}\}$ is the observation space, where agent $i$ receives its own observation $o_i=O(s;i)$ (in the fully observable setting, $O(s,i)=s$); $\mathcal{A}$ is the action space for each agent; $R(s,a_1,a_2;i)$ is the reward function for agent $i$; and $P(s'|s,a_1,a_2)$ is transition probability from state $s$ to state $s'$ when agent $i$ takes action $a_i$. Each agent has a policy $\pi_i(o_i;\theta_i)$ which produces a (stochastic) action and is parameterized by $\theta_i$. In the decentralized RL framework, each agent $i$ optimizes its expected accumulative reward $U_i(\theta_i)=\E_{a_1\sim\pi_1,a_2\sim\pi_2}\left[\sum_t \gamma^t R(s^t,a^t_1,a^t_2;i)\right]$ with some discounted factor $\gamma$.

Consider we run decentralized RL on a particular a Markov game $M$ and the derived policy profile is $\left(\pi_1(\theta_1), \pi_2(\theta_2)\right)$. The desired result is that the expected reward $U_i(\theta_i)$ for each agent $i$ is maximized. We formally written this equilibrium evaluation objective as an evaluation function $E(\pi_1,\pi_2)$ and therefore the goal is to find the optimal policy profile $(\pi_1^\star,\pi_2^\star)$ w.r.t. $E$. Particularly for the games we considered in this paper, since every (approximate) equilibrium we ever discovered has a symmetric payoff, we focus on the empirical performance while assume a much simplified equilibrium selection problem here: it is equivalent to define $E(\pi_1,\pi_2)$ by $E(\pi_1,\pi_2)=\beta U_1(\theta_1)+(1-\beta)U_2(\theta_2)$ for any $0\le \beta\le 1$. Further discussions on the general equilibrium selection problem can be found in Sec.~\ref{sec:related}.


The challenge is that although running decentralized PG is a popular learning approach for complex Markov games, the derived policy profile $(\pi_1,\pi_2)$ is often sub-optimal, i.e., there exists $(\pi_1^\star,\pi_2^\star)$ such that $E(\pi_1^\star, \pi_2^\star)>E(\pi_1,\pi_2)$. It will be shown in Sec.~\ref{sec:expr} that even using state-of-the-art exploration techniques, the optimal policies $(\pi_1^\star, \pi_2^\star)$ can be hardly achieved.

Following the insights from Sec.~\ref{sec:background},
reward randomization can be applied to a Markov game $M$ similarly:
if the reward function in $M$ poses difficulties for PG to discover some particular strategy, it might be easier to reach this desired strategy with a perturbed reward function. Hence, we can then define a reward function space $\mathcal{R}$, 
train a population of policy profiles \reminder{in parallel} with sampled reward functions from $\mathcal{R}$ and select \reminder{the desired strategy} 
by evaluating the obtained policy profiles in the original game $M$.
Formally, instead of purely learning in the original game $M=(\mathcal{S},\mathcal{O},\mathcal{A},R,P)$,  %
we define a proper \emph{subspace} $\mathcal{R}$ over possible reward functions $R:\mathcal{S}\times\mathcal{A}\times\mathcal{A}\to\mathbb{R}$ and use $M(R')=(\mathcal{S},\mathcal{O},\mathcal{A},R',P)$ to denote the induced Markov game by replacing the original reward function $R$ with another $R'\in\mathcal{R}$. To apply reward randomization, we draw $N$ samples $R^{(1)},\ldots,R^{(N)}$ from $\mathcal{R}$, run PG to learn  $(\pi_1^{(i)}, \pi_2^{(i)})$ 
on each induced game $M(R^{(i)})$, and pick the desired policy profile $(\pi_1^{(k)},\pi_2^{(k)})$ by calculating $E$ in the original game $M$. Lastly, we can fine-tune the policies $\pi_1^{(k)}$, $\pi_2^{(k)}$ in $M$ to further boost the practical performance (see discussion below). 
We call this learning procedure, \emph{Reward-Randomized Policy Gradient} (\algo), which is summarized in Algo.~\ref{algo:RPG}.

{
\begin{figure*}[bt]
\vspace{-8mm}
\begin{algorithm}[H]
   \caption{{\algo}: Reward-Randomized Policy Gradient\label{algo:RPG}}
   {\bfseries Input:} original game $M$, search space $\mathcal{R}$, evaluation function $E$, population size $N$\; 
   draw samples $\{R^{(1)},\ldots,R^{(N)}\}$ from $\mathcal{R}$\;
   $\{\pi_1^{(i)},\pi_2^{(i)}\}\gets$ PG on induced games $\{M(R^{(i)})\}_i$ in parallel \tcp*{RR phase}
   select the best candidate $\pi_1^{(k)}$, $\pi_2^{(k)}$ by $k=\arg\max_i E(\pi_1^{(i)},\pi_2^{(i)})$ \tcp*{evaluation phase}
   $\pi_1^{\star},\pi_2^\star\gets$ fine-tune $\pi_1^{(k)}$, $\pi_2^{(k)}$ on $M$ via PG (if necessary) \tcp*{fine-tuning phase}
   \textbf{return} $\pi_1^\star$, $\pi_2^\star$\;
\end{algorithm}
\vspace{-5mm}
\end{figure*}
}

\textbf{Reward-function space: }  In general, the possible space for a \emph{valid} reward function is intractably huge. However, in practice, almost all the games designed by human have low-dimensional reward structures \reminder{based on objects or events, so that we can (almost) always} 
formulate the reward function in a linear form $R(s,a_1,a_2;i)=\phi(s,a_1,a_2;i)^T w$ where $\phi(s,a_1,a_2;i)$ is a low-dimensional feature vector and $w$ is some weight. 

A simple and general design principle for $\mathcal{R}$ is to \emph{fix the 
feature vector $\phi$ while only randomize the weight $w$}, i.e.,
$\mathcal{R}=\{R_w: R_w(s,a_1,a_2;i)=\phi(s,a_1,a_2;i)^T w, \|w\|_{\infty}\le C_{\max}\}$. Hence, the overall search space remains a similar structure as the original game $M$ but  contains a diverse range of preferences over different feature dimensions. 
Notably, since the optimal strategy is invariant to the scale of the reward function $R$, theoretically any $C_{\max}>0$ results in the same search space. However, in practice, the scale of reward may significantly influence MARL training stability, so we typically ensure the chosen $C_{\max}$ to be compatible with the PG algorithm in use.

Note that a feature-based reward function is a standard assumption in the literature of inverse RL~\cite{ng2000algorithms,ziebart2008maximum,hadfield2017inverse}.
In addition, such a reward structure is also common in many popular RL application domains. 
For example, in navigation games~\cite{mirowski2016learning,lowe2017multi,wu2018building}, the reward is typically set to the negative distance from the target location $L_T$ to the agent's location $L_A$ plus a success bonus, so the feature vector $\phi(s,a)$ can be written as a 2-dimensional vector
$[\|L_T-L_A\|_2,\mathbb{I}(L_T=L_A)]$; in real-time strategy games~\cite{wu2016training,vinyals2017starcraft,openai2019dota}, $\phi$ is typically related to the bonus points for destroying each type of units; 
in robotics manipulation~\cite{levine2016end,li19relationalrl,yu2019meta}, $\phi$ is often about the distance between the robot/object and its target position; in general multi-agent games~\cite{lowe2017multi,leibo2017multi,Baker2020Emergent}, $\phi$ could contain each agent's individual reward as well as the joint reward over each team, which also enables the representation of different prosociality levels for the agents by varying the weight $w$.

\textbf{Fine tuning:} 
\reminder{There are two benefits: (1) the policies found in the perturbed game may not remain an equilibrium in the original game, so fine-tuning ensures convergence; (2) in practice, fine-tuning could further help escape} a suboptimal mode via the noise in PG~\cite{ge2015escaping,kleinberg2018alternative}.
We remark that a practical issue for fine-tuning is that when the PG algorithm adopts the actor-critic framework (e.g., PPO), we need 
an additional \emph{critic warm-start phase}, which only trains the value function while keeps the policy unchanged, before the \emph{fine-tuning phase} starts. This warm-start phase significantly stabilizes policy learning by ensuring the value function is fully functional for variance reduction w.r.t. the reward function $R$ in the original game $M$ when estimating policy gradients.

\vspace{-1mm}
\subsection{Learning to Adapt with Diverse Opponents}
\vspace{-1mm}

\begin{wrapfigure}{R}{0.45\textwidth}
\vspace{-6mm}
\small{
\begin{algorithm}[H]
   \caption{Learning to Adapt\label{algo:adapt}}
   {\bfseries Input:} game $M$, policy set $\Pi_2$, initial $\pi_1^a$\; 
   \Repeat{enough iterations}{
    draw a policy $\pi_2'$ from $\Pi_2$\;
    evaluate $\pi_1^a$ and $\pi_2'$ on $M$ and collect data\;
    update $\theta^a$ via PG if enough data collected\;
   }
   \textbf{return} $\pi_1^a(\theta^a)$\;
\end{algorithm}}
\vspace{-5mm}
\end{wrapfigure}

In addition to the final policies $\pi_1^\star$, $\pi_2^\star$, another benefit from {\algo} is that the population of $N$ policy profiles contains diverse strategies (more in Sec.~\ref{sec:expr}). 
With a diverse set of strategies,
we can build an adaptive agent by training with a random opponent policy sampled from the set per episode, 
so that the agent is forced to behave differently based on its opponent's behavior.
For simplicity, we consider learning an adaptive policy $\pi_1^a(\theta^a)$ for agent 1. The procedure remains the same for agent 2. 
Suppose a policy population $\mathcal{P}=\{\pi_2^{(1)},\ldots,\pi_2^{(N)}\}$ is obtained during the RR phase, we first construct a diverse strategy set $\Pi_2\subseteq\mathcal{P}$ that contains all the discovered behaviors from $\mathcal{P}$. 
Then we construct a mixed strategy by randomly sampling a policy $\pi_2'$ from $\Pi_2$ in every training episode and run PG to learn $\pi_1^a$ by competing against this constructed mixed strategy. The procedure is summarized in Algo.~\ref{algo:adapt}.  
Note that setting $\Pi_2=\mathcal{P}$ appears to be a simple and natural choice. However, in practice, since $\mathcal{P}$ typically contains just a few strategic behaviors,  it is unnecessary for $\Pi_2$ to include every individual policy from $\mathcal{P}$. Instead, it is sufficient to simply ensure $\Pi_2$ contains \emph{at least} one policy from each  equilibrium in $\mathcal{P}$ (more details in Sec.~\ref{sec:expr:adapt}).
Additionally, this method does not apply to the one-shot game setting \reminder{(i.e., horizon is 1)} because the adaptive agent does not \reminder{have any prior knowledge about its opponent's identity before the game starts.} 


\textbf{Implementation: } 
We train an RNN policy for $\pi_1^a(\theta^a)$. 
It is critical that the policy input does not directly reveal the opponent's identity, so that it is forced to identify the opponent strategy through \reminder{what it has observed.} 
On the contrary, \reminder{when adopting an actor-critic PG framework~\cite{lowe2017multi},} it is extremely beneficial to include the identity information in the critic input, which makes critic learning substantially easier and significantly stabilizes training. We also utilize a multi-head architecture adapted from the multi-task learning literature~\cite{yu2019meta}, i.e., use a separate value head for each training opponent, which empirically results in the best training performance.


\vspace{-1mm}
\section{Testbeds for {\algo}: Temporal Trust Dilemmas}\label{sec:game}
\vspace{-1mm}

We introduce three 2-player Markov games as testbeds for {\algo}. 
\reminder{All these games have a diverse range of NE strategies including both ``risky'' cooperative NEs with high payoffs but hard to discover and ``safe'' non-cooperative NEs with lower payoffs. We call them \emph{temporal trust dilemmas}.} Game descriptions are in a high level to highlight the game dynamics. More details are in Sec.~\ref{sec:expr} and App.~\ref{app:env-details}.



\vspace{-5mm}

\textbf{Gridworlds:} We consider two games adapted from \citet{peysakhovich2018prosocial}, \emph{Monster-Hunt} (Fig.~\ref{fig:MonsterHuntGW}) and \emph{Escalation} (Fig.~\ref{fig:EscalationGW}). Both games have a 5-by-5 grid and symmetric rewards.

\begin{wrapfigure}{r}{0.25\textwidth}
\vspace{-3mm}
\includegraphics[width=0.25\textwidth]{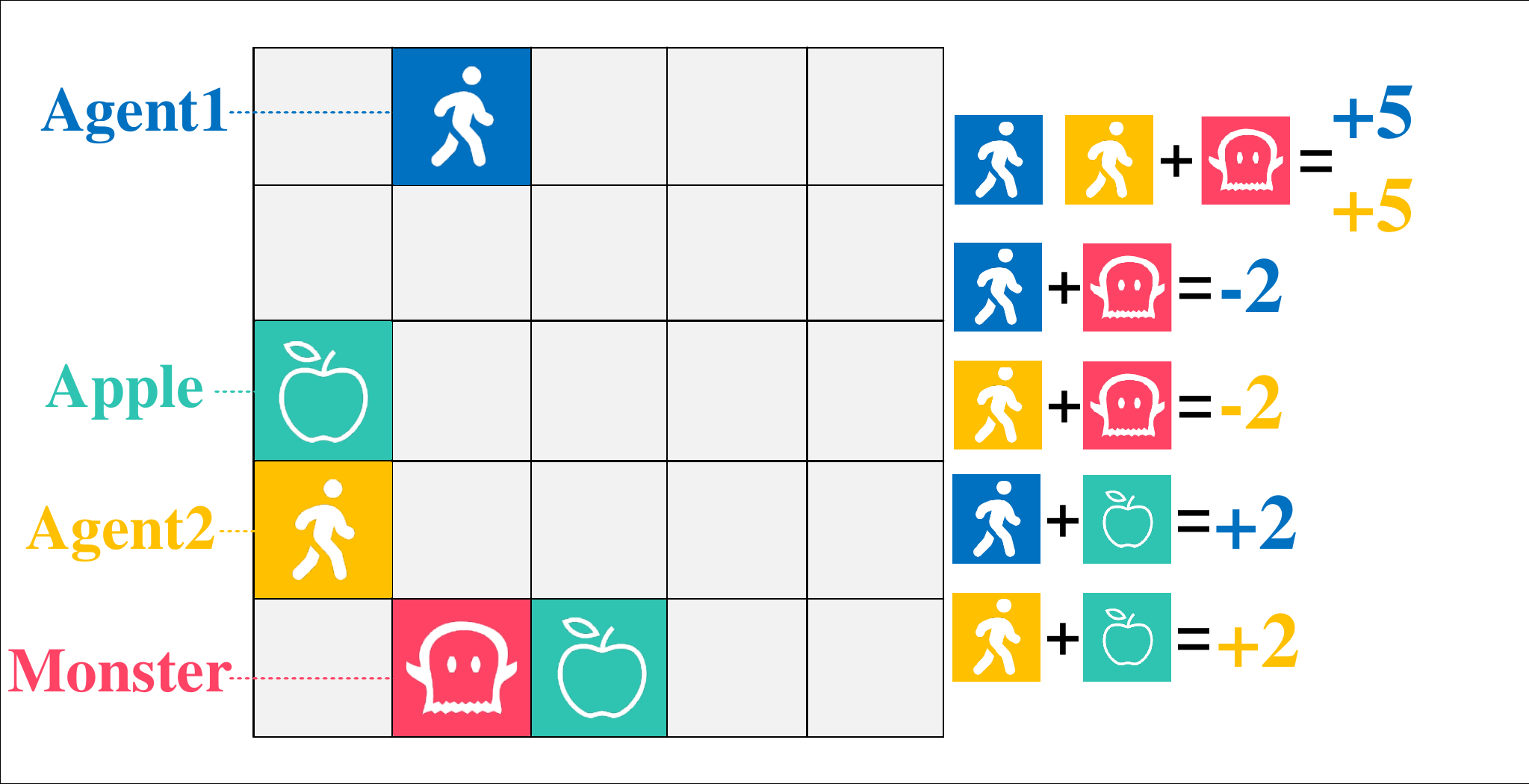}
\vspace{-6mm}
\caption{\emph{Monster-Hunt}}
\label{fig:MonsterHuntGW}
\vspace{-4mm}
\end{wrapfigure}
\emph{Monster-Hunt} contains a monster and two apples. Apples are static while the monster \emph{keeps moving towards its closest agent}. If a single agent meets the monster, it \emph{loses} a penalty of 2; if two agents catch the monster together, they both earn a bonus of 5. 
Eating an apple always raises a bonus of 2.
Whenever an apple is eaten or the monster meets an agent, the entity will respawn randomly. 
The optimal payoff can only be achieved when both agents precisely catch the monster simultaneously. 

\begin{wrapfigure}{r}{0.25\textwidth}
\vspace{-3mm}
\includegraphics[width=0.25\textwidth]{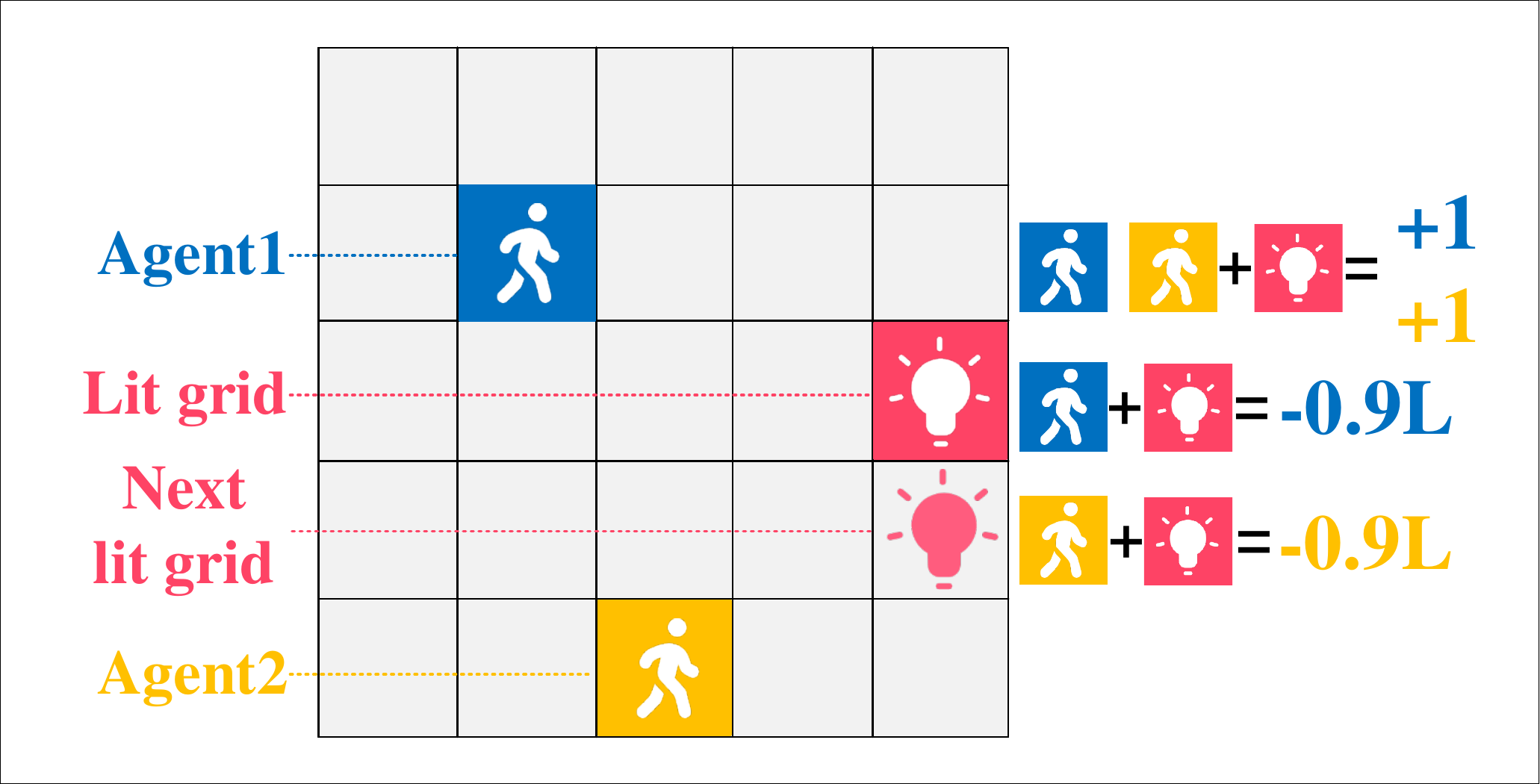}
\vspace{-6mm}
\caption{\emph{Escalation}}
\vspace{-4mm}
\label{fig:EscalationGW}
\end{wrapfigure}

\emph{Escalation} contains a lit grid. When two agents both step on the lit grid, they both get a bonus of 1 and a neighboring grid will be lit up in the next timestep. 
If only one agent steps on the lit grid, it gets a penalty of $0.9L$, where $L$ denotes the consecutive cooperation steps until that timestep, and the lit grid will respawn randomly. 
Agents need to stay together on the lit grid to achieve the maximum payoff despite of the growing penalty. 
There are multiple NEs: for each $L$, that both agents cooperate for $L$ steps and then leave the lit grid jointly forms an NE.

\textbf{\agar} is a popular multiplayer online game. Players control cells in a Petri dish to gain as much mass as possible by eating smaller cells 
while avoiding being eaten by larger ones. Larger cells move slower. Each player starts with one cell but can split a sufficiently large cell into two, allowing them to control multiple cells~\cite{wiki:agar}. 
We consider a simplified scenario (Fig.~\ref{fig:Agar-all}) with 2 players (agents) and tiny script cells, 
which automatically runs away when an agent comes by. 
There is a low-risk non-cooperative strategy, i.e., two agents stay away from each other and hunt script cells independently. Since the script cells move faster, it is challenging for a single agent to hunt them. By contrast, two agents can cooperate to encircle the script cells to accelerate hunting. However, cooperation is extremely risky for the agent with less mass: two agents need to stay close to cooperate but the larger agent may defect by eating the smaller one and gaining an immediate big bonus. 

\begin{figure}[h]
\vspace{-3mm}
	\centering
    \subfigure[Basic elements]{ \label{fig:Agar} 
		\includegraphics[width=0.3\textwidth]{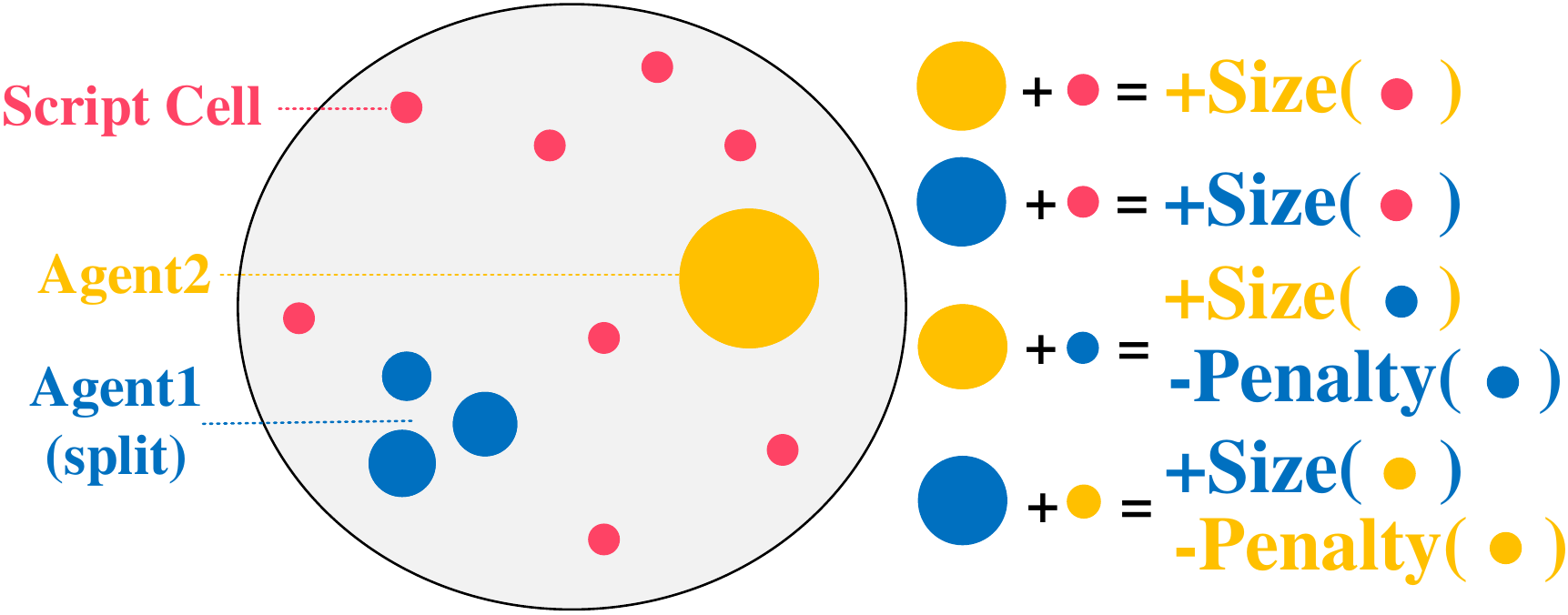}
	}
	\subfigure[Common behavior: \emph{Split}, \emph{Hunt} and \emph{Merge}]{ \label{fig:Agar-Split} 
		\includegraphics[width=0.5\textwidth]{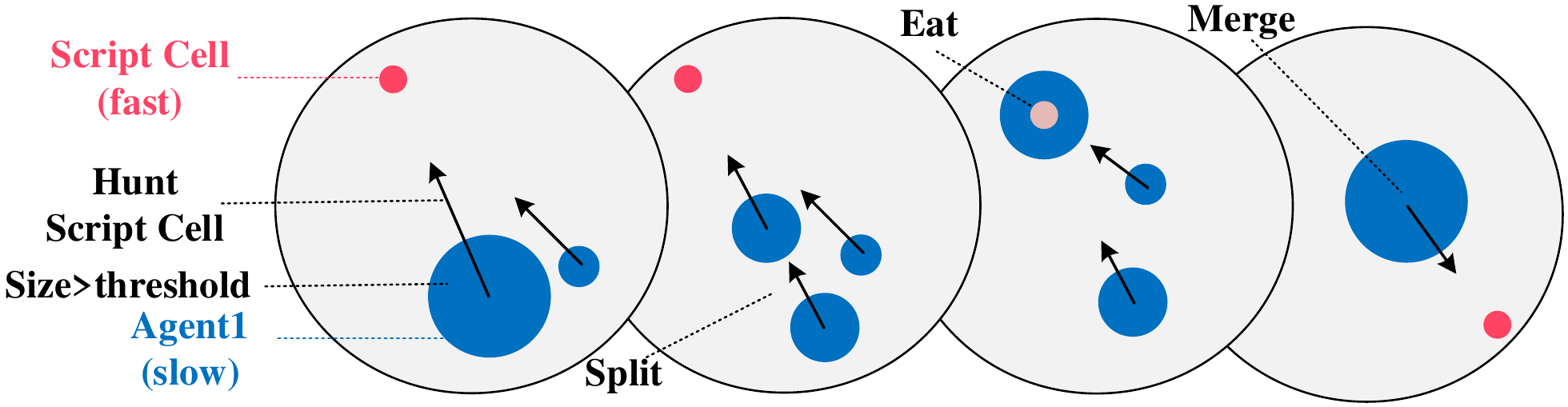}
	}
	\vspace{-4mm}
	\caption{{\agar:} (a) a simplified 2-player setting; (b) basic motions: \emph{split}, \emph{hunt} script cells, \emph{merge}.}
	\label{fig:Agar-all}
	\vspace{-4mm}
\end{figure}

\vspace{-2mm}
\section{Experiment Results}\label{sec:expr}
\vspace{-1mm}
In this section, we present empirical results showing that in all the introduced  testbeds, including the real-world game {\agar}, {\algo} always discovers diverse strategic behaviors and achieves an equilibrium with substantially higher rewards than standard multi-agent PG methods. We use PPO~\cite{schulman2017proximal} for PG training. \reminder{Training episodes for {\algo} are accumulated over all the perturbed games.} Evaluation results are averaged over 100 episodes in gridworlds and 1000 episodes in {\agar}. We repeat all the experiments with 3 seeds and use {\small{$X$}\scriptsize{($Y$)}} to denote mean $X$ with standard deviation $Y$ in all tables. Since all our discovered (approximate) NEs \reminder{are symmetric} 
for both players, we simply take $E(\pi_1,\pi_2)=U_1(\pi_1,\pi_2)$ as our evaluation function and only measure \emph{the reward of agent 1} in all experiments \reminder{for simplicity}. More details can be found in appendix. 

\vspace{-2mm}
\subsection{Gridworld Games}
\vspace{-1mm}

\begin{wrapfigure}{r}{0.27\textwidth}
\vspace{-8mm}
	\includegraphics[width=0.26\textwidth]{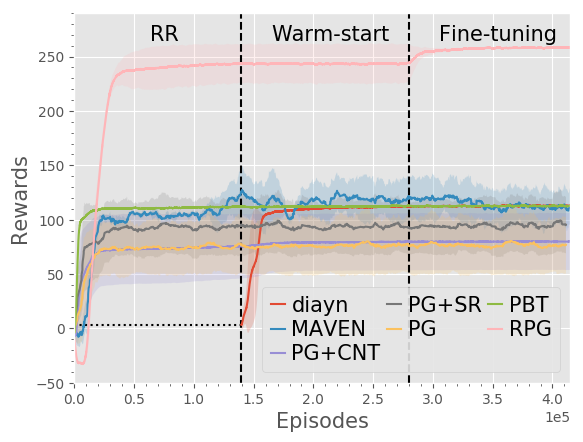}
	\vspace{-2mm}
\caption{Full process of {\algo} in \emph{Monster-Hunt}}\label{fig:StagHuntGW-3phases}
\vspace{-4mm}
\end{wrapfigure}

\textbf{\emph{Monster-Hunt}: } Each agent's reward \reminder{is determined by three features per timestep:} 
(1) whether two agents catch the monster together; (2) whether the agent steps on an apple; (3) whether the agent meets the monster alone. 
Hence, we write $\phi(s,a_1,a_2;i)$ as a 3-dimensional 0/1 vector with one dimension for one feature. The original game corresponds to $w=[5,2,-2]$. 
We set $C_{\max}=5$ for sampling $w$. 

We compare {\algo} with a collection of baselines, \reminder{including standard PG (PG), PG with shared reward (PG+SR), population-based training  (PBT), which trains the same amount of parallel PG policies as {\algo}, as well as popular exploration methods, i.e., count-based exploration (PG+CNT)~\cite{tang2017count} and MAVEN~\cite{mahajan2019maven}. We also consider an additional baseline, DIAYN~\cite{eysenbach2018diversity}, which discovers diverse skills using a \emph{trajectory-based} diversity reward. 
For a fair comparison, we use DIAYN to first pretrain diverse policies (conceptually similar to the RR phase), then evaluate the rewards for every pair of obtained policies to select the best policy pair (i.e., evaluation phase, shown with the dashed line in Fig.~\ref{fig:StagHuntGW-3phases}), and finally fine-tune the selected policies until convergence (i.e., fine-tuning phase).
The results of {\algo} and the 6 baselines are summarized in Fig.~\ref{fig:StagHuntGW-3phases}, where {\algo} consistently discovers a strategy with a significantly higher payoff.}
Note that \reminder{the strategy with the optimal payoff} may not always directly emerge in the RR phase, and there is neither a particular value of $w$ constantly being the best candidate: e.g., in the RR phase, $w=[5,0,2]$ frequently produces a sub-optimal cooperative strategy  (Fig.~\ref{fig:StagHuntGW_stay}) with a reward lower than other $w$ values, but it can also occasionally lead to the optimal strategy (Fig.~\ref{fig:StagHuntGW_Finetune}). Whereas, with the fine-tuning phase, the overall procedure of {\algo}  always produces the optimal solution. We visualize both two emergent cooperative strategies in Fig.~\ref{fig:StagHuntGW-Strategy-Demo}: in the sub-optimal one (Fig.~\ref{fig:StagHuntGW_stay}), two agents simply move to grid (1,1) together, stay still and wait for the monster, while in the optimal one (Fig.~\ref{fig:StagHuntGW_Finetune}), two agents meet each other first and then actively move towards the monster jointly, which further improves hunting efficiency.


\begin{figure}[t!]
    \vspace{-12mm}
	\centering
	\subfigure[\textrm{Strategy w. $w$=$[5,0,0]$ and $w$=$[5,0,2]$ (by chance)}]
	{
	\label{fig:StagHuntGW_stay}
    \includegraphics[width=0.47\textwidth]{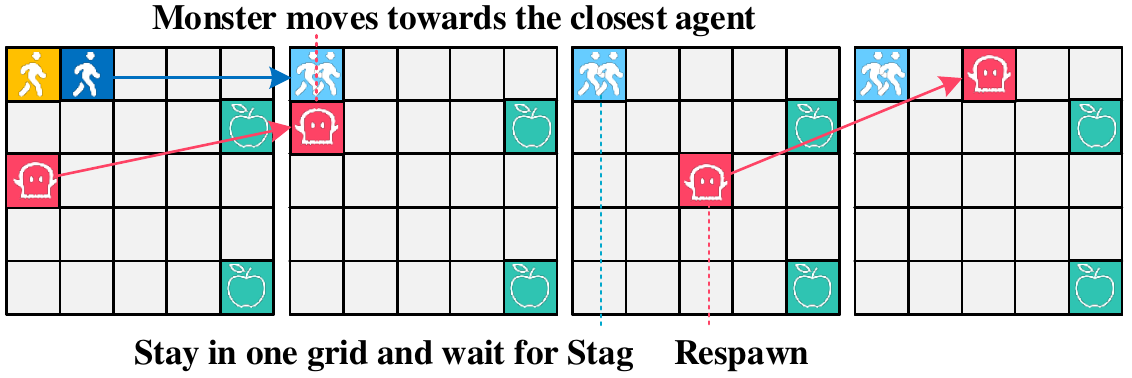}
    }
    \subfigure[The final strategy after fine-tuning]{
	\label{fig:StagHuntGW_Finetune}
    \includegraphics[width=0.47\textwidth]{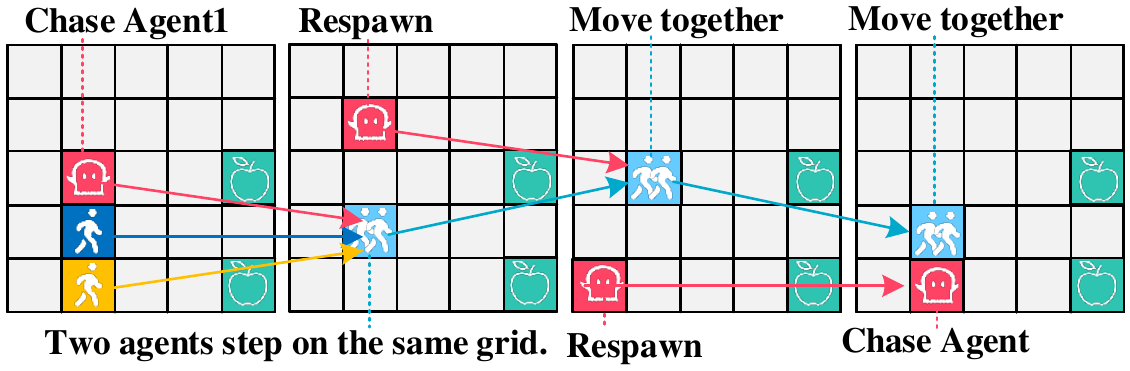}
    }
    \vspace{-4mm}
	\caption{Emergent cooperative (approximate) NE strategies found by {\algo} in \emph{Monster-Hunt}} 
	\label{fig:StagHuntGW-Strategy-Demo}
	\vspace{-3mm}
\end{figure}

\begin{wrapfigure}{r}{0.27\textwidth}
\vspace{-6mm}
		\includegraphics[width=0.26\textwidth]{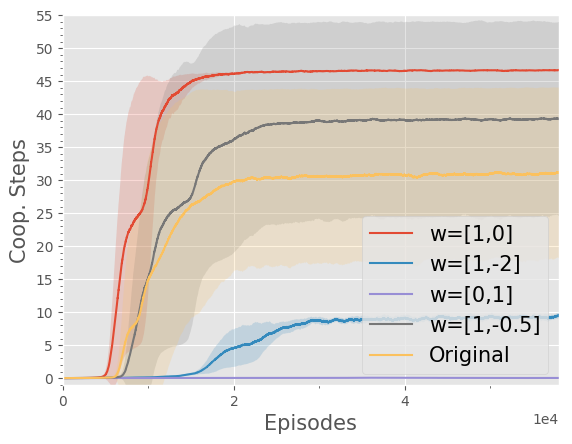}
		\vspace{-2mm}
\caption{RR in \emph{Escalation}}\label{fig:EscalationGW-RR}
\vspace{-7mm}
\end{wrapfigure}

\textbf{\emph{Escalation}:} We can represent $\phi(s,a_1,a_2;i)$ as 2-dimensional vector containing (1) whether two agents are both in the lit grid and (2) the total consecutive cooperation steps. The original game corresponds to $w=[1,-0.9]$. We set $C_{\max}=5$ and show the total number of cooperation steps per episode for several selected $w$ values throughout training in Fig.~\ref{fig:EscalationGW-RR}, where RR is able to discover different NE strategies. Note that $w=[1,0]$ has already produced the strategy with the optimal payoff in this game, so the fine-tuning phase is no longer needed.

\vspace{-1mm}
\subsection{2-Player Games in \agar}
\vspace{-1mm}
\reminder{There are two} different settings of {\agar}: (1) the \emph{standard setting}, i.e., an agent gets a penalty of $-x$ for losing a mass $x$, and (2) the more challenging \emph{aggressive setting}, i.e., no penalty for mass loss. Note in both settings: (1) when an agent eats a mass $x$, it always gets a bonus of $x$; (2) if an agent loses all the mass, it immediately dies while the other agent can still play in the game. The aggressive setting promotes agent interactions and typically leads to more diverse strategies in practice. 
Since both settings strictly define the  penalty function for mass loss, we do not randomize this reward term. Instead, we consider two other factors: (1) the bonus for eating the other \emph{agent}; (2) the prosocial level of both agents. We use a 2-dimensional vector $w=[w_0,w_1]$, where $0\le w_0,w_1\le 1$, to denote a particular reward function such that (1) when eating a cell of mass $x$ from the other \emph{agent}, the bonus is $w_0\times x$, and (2) the final reward is a linear interpolation between $R(\cdot;i)$ and $0.5(R(\cdot;0)+R(\cdot;1))$ w.r.t. $w_1$, i.e., when $w_1=0$, each agent optimizes its individual reward while when $w_1=1$, two agents have a shared reward. The original game in both {\agar} settings corresponds to $w=[1,0]$.

\begin{figure*}[bt]
	\centering
	\begin{minipage}{0.52\linewidth}
	\subfigure[Cooperate]{
		\label{fig:Agar-Coop}
		\includegraphics[width=0.57\textwidth]{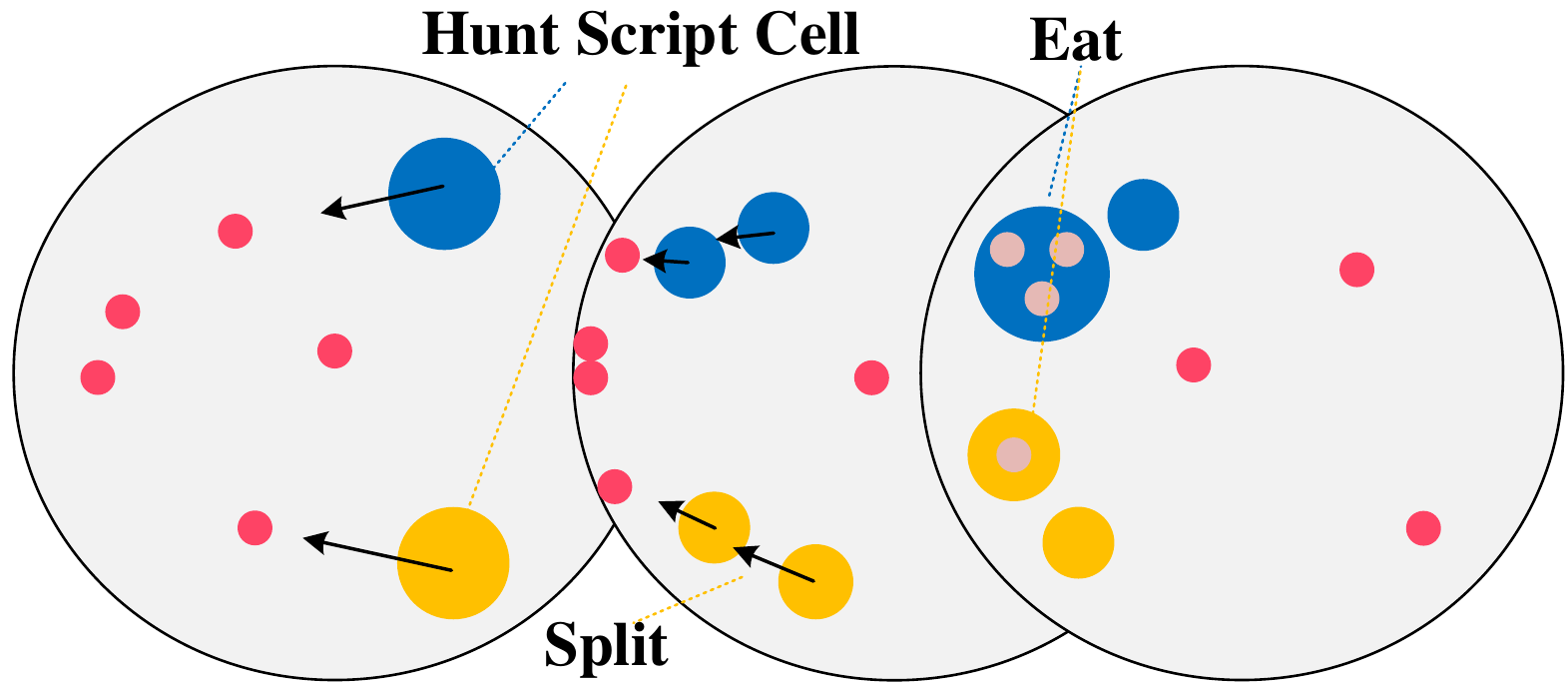}
	}
	\subfigure[Attack]{ \label{fig:Agar-Defect}
		\includegraphics[width=0.37\textwidth]{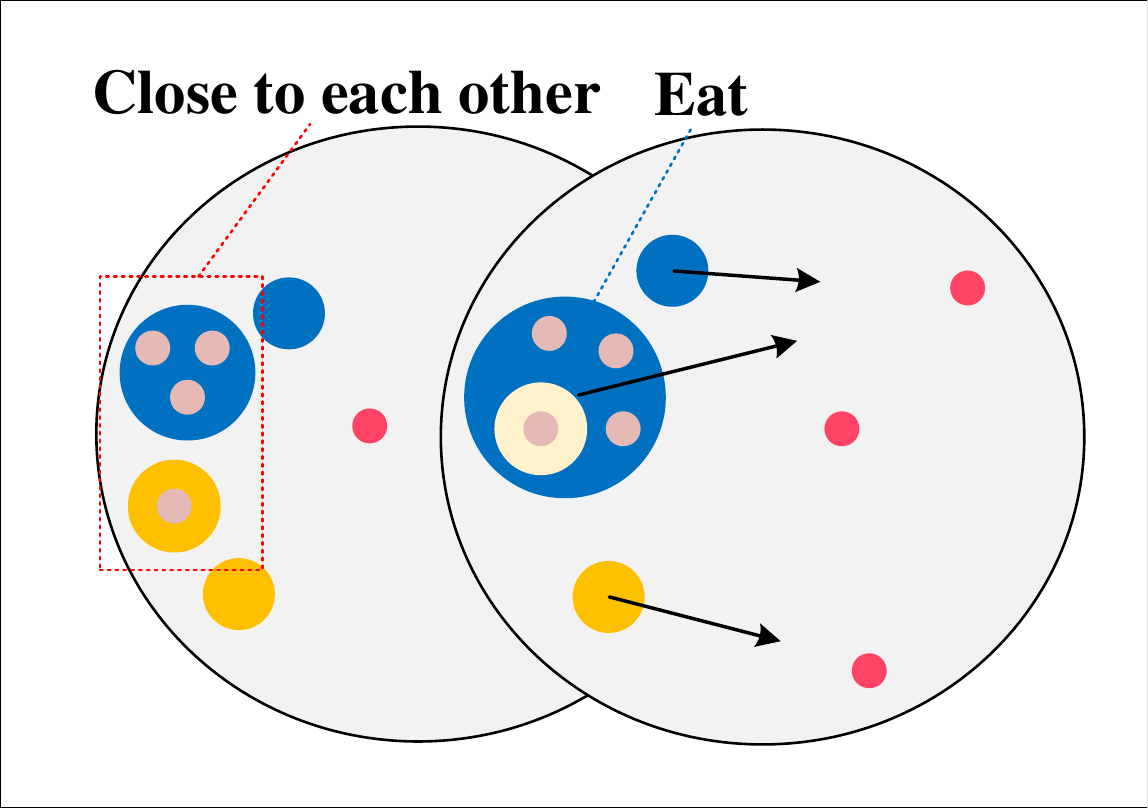}
	} 
	\vspace{-5mm}
	\caption{Emergent strategies in \emph{standard} {\agar}: (a) agents \emph{cooperate} to hunt  efficiently; (b) a larger agent breaks the cooperation by \emph{attacking} the other.\label{fig:Agar-Coop-Defect}   }
	\vspace{-1mm}
	\end{minipage}
	\hspace{1.5mm}
	\begin{minipage}{0.45\textwidth}
	\small{
	    \centering
	    \begin{tabular}{@{}ccccc@{}}\toprule
	    
        & PBT & RR & RPG  & RND\\ \midrule
        Rew. & 3.8\scriptsize{(0.3)} & 3.8\scriptsize{(0.2)} & \textbf{4.3}\scriptsize{(0.2)} & 2.8\scriptsize{(0.3)} \\  
        \#Coop. & 1.9\scriptsize{(0.2)} & \textbf{2.2}\scriptsize{(0.1)} & 2.0\scriptsize{(0.3)} & 1.3\scriptsize{(0.2)} \\ 
        \#Hunt & 0.6\scriptsize{(0.1)} & 0.4\scriptsize{(0.0)} & \textbf{0.7}\scriptsize{(0.0)} & 0.6\scriptsize{(0.1)} \\ \bottomrule
	    \end{tabular}
	    }
	    \vspace{-1mm}
	    \captionsetup{type=table}\caption{Results in the \emph{standard setting} of {\agar}. \emph{PBT}: \reminder{population training of parallel PG policies;} \emph{RR}: best policy in the RR phase ($w$=$[1,1])$; \emph{RPG}: fine-tuned policy; \emph{RND}: PG with RND bonus in the original game.}
    	\label{tab:Agar-Standard-Result}
	    
    	\vspace{-3mm}
	\end{minipage}
\end{figure*} 

\textbf{Standard setting:}
PG in the original game ($w=[1,0]$) leads to a typical trust-dilemma dynamics: the two agents first learn to hunt and occasionally \emph{Cooperate} (Fig.~\ref{fig:Agar-Coop}), i.e., eat a script cell with the other agent close by; then accidentally one agent \emph{Attacks} the other agent (Fig.~\ref{fig:Agar-Defect}), which yields a big immediate bonus and makes the policy aggressive; finally policies converge to the non-cooperative equilibrium 
where both agents keep apart and hunt alone. 
The quantitative results are shown in Tab.~\ref{tab:Agar-Standard-Result}. \reminder{Baselines include population-based training (PBT) and a state-the-art exploration method for high-dimensional state, \emph{Random Network Distillation} (RND)~\cite{burda2018exploration}. RND and PBT occasionally learns cooperative strategies while RR stably discovers a cooperative equilibrium with $w=[1,1]$, and the full RPG further improves the rewards.}  
Interestingly, the best strategy obtained in the RR phase even has a 
higher \emph{Cooperate} frequency than the full RPG: fine-tuning transforms the strong cooperative strategy to a more efficient strategy, which has a better balance between \emph{Cooperate} and selfish \emph{Hunt} and produces a higher average reward. 


\begin{figure*}[bt]
	\centering
	\vspace{-8mm}
	\begin{minipage}{0.68\linewidth}
	\small{
	\centering
	    \begin{tabular}{@{}ccccccc@{}}\toprule
        
         & PBT &$w$=\scriptsize{$[0.5,1]$}& $w$=\scriptsize{$[0,1]$}& $w$=\scriptsize{$[0,0]$} & RPG  & RND\\ \midrule
    	  Rew.& 3.3\scriptsize{(0.2)} & 4.8\scriptsize{(0.6)} & 5.1\scriptsize{(0.4)} & 6.0\scriptsize{(0.5)} & \textbf{8.9}\scriptsize{(0.3)} & 3.2\scriptsize{(0.2)}\\ 
    	  \#Attack & 0.4\scriptsize{(0.0)} & 0.7\scriptsize{(0.2)} & 0.3\scriptsize{(0.1)} & 0.5\scriptsize{(0.1)} & \textbf{0.9}\scriptsize{(0.1)} & 0.4\scriptsize{(0.0)}\\
    	  \#Coop.& 0.0\scriptsize{(0.0)} & 0.6\scriptsize{(0.6)} & \textbf{2.3}\scriptsize{(0.3)} & 1.6\scriptsize{(0.1)} & 2.0\scriptsize{(0.2)} & 0.0\scriptsize{(0.0)}\\ 
    	  \#Hunt & 0.7\scriptsize{(0.1)} & 0.6\scriptsize{(0.3)} & 0.3\scriptsize{(0.0)} & 0.7\scriptsize{(0.0)} & \textbf{0.9}\scriptsize{(0.1)} & 0.7\scriptsize{(0.0)}\\\bottomrule
	    \end{tabular}
	    \vspace{-2mm}
	    \captionsetup{type=table}
	\caption{Results in the \emph{aggressive setting} of {\agar}: \emph{PBT}: \reminder{population training of parallel PG policies}; \emph{RR}: $w$=$[0,0]$ is the best candidate via RR; \emph{RPG}: fine-tuned policy; \emph{RND}: PG with RND bonus.}\vspace{-2mm}
	\label{tab:Agar-Aggressive-Results}    
	     }
	\end{minipage}
	\hspace{2mm}
	\begin{minipage}{0.28\linewidth}
		\includegraphics[width=0.98\textwidth]{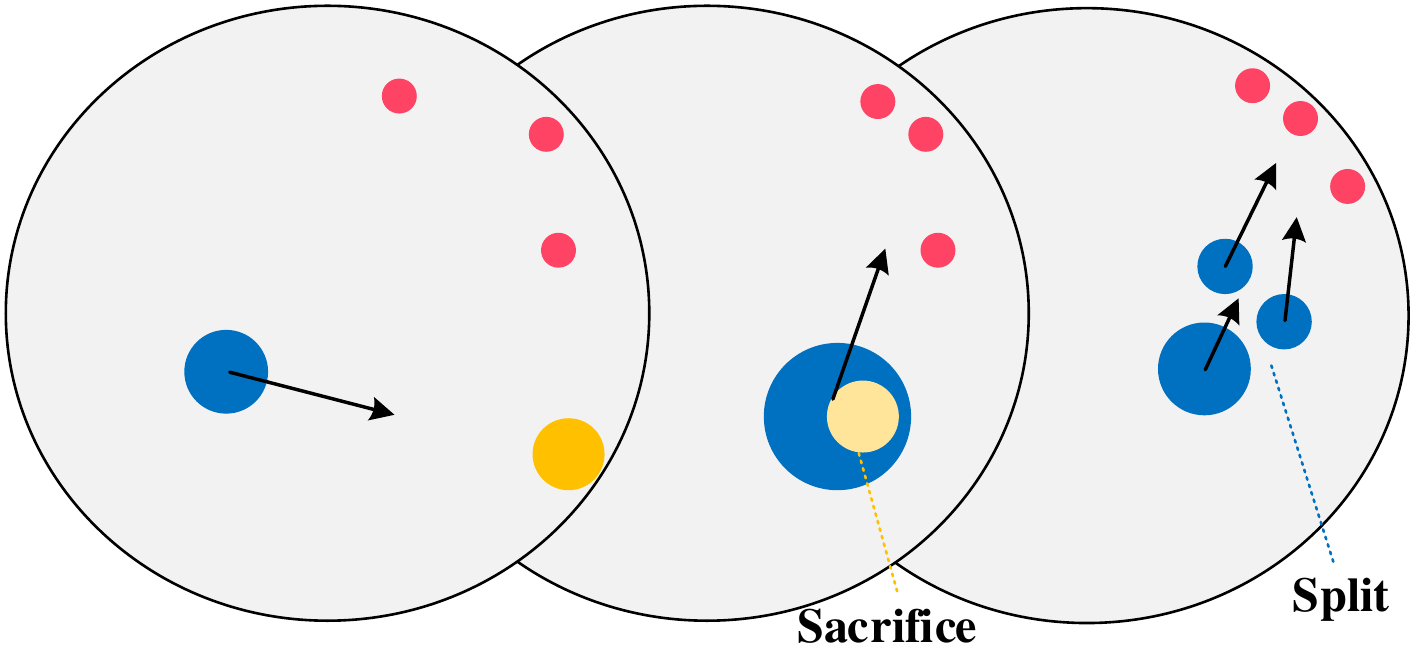}
	\caption{\emph{Sacrifice} strategy, $w$=$[1,1]$, \emph{aggressive setting}.\label{fig:Agar-Sacrifice}}
	\end{minipage}
	\vspace{-1mm}
\end{figure*}

\begin{figure}[bt!]
  \centering 
    \includegraphics[width=.9\textwidth]{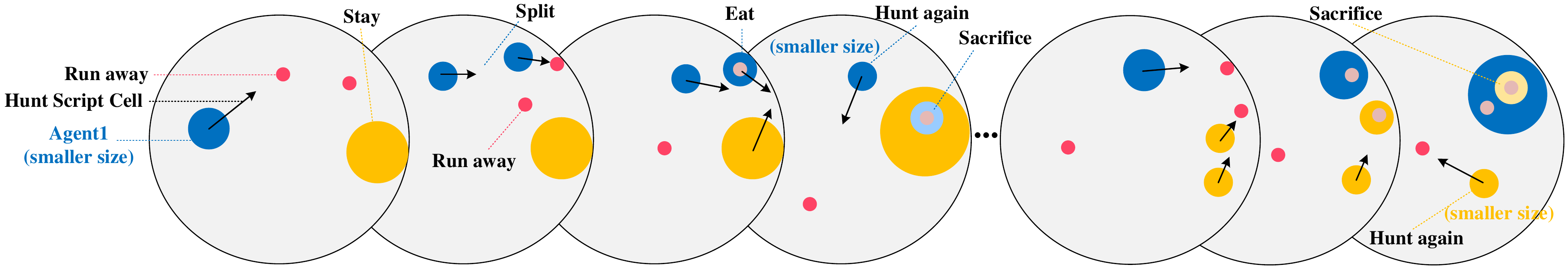} 
  \vspace{-2mm}
  \caption{\emph{Perpetual} strategy, $w$=$[0.5, 1]$ (by chance), \emph{aggressive setting}, i.e., two agents mutually sacrifice themselves. One agent first splits to sacrifice \emph{a part} of its mass to the larger agent while the other agent also does the same thing later to repeat the sacrifice cycle.
  } 
  \vspace{-4mm}
  \label{fig:Agar-perpetual}
\end{figure}

\textbf{Aggressive setting: } Similarly, we apply RPG in the aggressive setting 
and show results in Tab.~\ref{tab:Agar-Aggressive-Results}. \reminder{Neither PBT nor RND was able to find any cooperative strategies in the aggressive game while RPG stably discovers a cooperative equilibrium with a significantly higher reward.}
We also observe a diverse set of complex strategies in addition to normal \emph{Cooperate} and \emph{Attack}. Fig.~\ref{fig:Agar-Sacrifice} visualizes the \emph{Sacrifice} strategy derived with $w=[1,1]$: the smaller agent rarely hunts script cells;  
instead, it waits in the corner for being eaten by the larger agent to contribute all its mass to its partner. 
Fig.~\ref{fig:Agar-perpetual} shows another surprisingly novel emergent strategy by $w=[0.5,1]$: each agent first hunts individually to gain enough mass; then one agent splits into smaller cells while the other agent carefully eats \emph{\textbf{a portion}} of the split agent; later on, when the agent who previously lost mass 
gains sufficient mass, the larger agent similarly splits itself to contribute to the other one, which completes the (ideally) never-ending loop of partial sacrifice. 
We name this strategy \emph{Perpetual} for its conceptual similarity to the perpetual motion machine. Lastly, the best strategy is produced by $w=[0,0]$ with a balance between \emph{Cooperate} and \emph{Perpetual}: they cooperate to hunt script cells to gain mass efficiently and quickly perform mutual sacrifice as long as their mass is sufficiently large for split-and-eat. Hence, although the RPG policy has relatively lower \emph{Cooperate} frequency than the policy by $w=[0,1]$, it yields a significantly higher reward thanks to a much higher \emph{Attack} (i.e., \emph{Sacrifice}) frequency. 


\vspace{-1mm}
\subsection{Learning Adaptive Policies}\label{sec:expr:adapt}
\vspace{-1mm}

\begin{wraptable}{r}{0.55\textwidth}
\vspace{-6mm}
\centering
\small{
\begin{tabular}{c|c|c|c|c}
Oppo. & M. & M-Coop. & M-Alone. & Apple.   \\ \hline
\#C-H         & 16.3(\scriptsize{19.2})& 20.9(\scriptsize{0.8}) & 14.2(\scriptsize{18.0})  & 2.7(\scriptsize{1.0})              \\ \hline
\#S-H          & 1.2(\scriptsize{0.4})& 0.4(\scriptsize{0.1})  & 2.2(\scriptsize{1.2}) & 2.2(\scriptsize{1.4})             \\ \hline
\#Apple          & 12.4(\scriptsize{7.3})& 3.3(\scriptsize{0.8})      & 10.9(\scriptsize{7.0})   & 13.6(\scriptsize{3.8})              \\ 
\end{tabular}
}
\vspace{-2mm}
\caption{Stats. of the adaptive agent in \emph{Monster-Hunt} with \emph{hold-out} test-time opponents. \#C(oop.)-H(unt): both agents catch the monster; \#S(ingle)-H(unt): the adaptive agent meets the monster alone; \#Apple: apple eating. The adaptive policy successfully exploits different opponents and rarely meets the monster alone.}
\label{tab:Adaptive-Monster}
\vspace{-4mm}
\end{wraptable}

\textbf{\emph{Monster-Hunt}:}
We select policies trained by 8 different $w$ values in the RR phase and use half of them for training the adaptive policy and the remaining half as hidden opponents for evaluation. We also make sure that both training and evaluation policies cover the following 4 strategy modes: (1) \emph{M(onster)}: the agent always moves towards the monster;  (2) \emph{M(onster)-Alone}: the agent moves  
towards the monster but also tries to keeps apart from the other agent; 
(3) \emph{M(onster)-Coop.}: the agent seeks to hunt the monster together with the other agent;  (4) \emph{Apple}: the agent only eats apple. The evaluation results are shown in Tab.~\ref{tab:Adaptive-Monster}, where the adaptive policy successfully exploits all the test-time opponents, including \emph{M(onster)-Alone}, which was trained to actively avoids the other agent.

\begin{wraptable}{r}{0.39\textwidth}
\vspace{-2mm}
\centering
\small{
\begin{tabular}{c|c|c|c}
Agent & Adapt. & Coop. & Comp. \\
\hline
\hline
\multicolumn{4}{c}{Opponent: Cooperative $\xrightarrow{}$ Competitive} \\
\hline
\hline
\#Attack & 0.2(\scriptsize{0.0}) & 0.3(\scriptsize{0.0}) & 0.1(\scriptsize{0.1}) \\
\hline
Rew. & 0.7(\scriptsize{0.7}) & -0.2(\scriptsize{0.6}) & 0.8(\scriptsize{0.5})\\
\hline
\hline
\multicolumn{4}{c}{Opponent: Competitive $\xrightarrow{}$ Cooperative} \\
\hline
\hline
\#Coop. & 1.0(\scriptsize{0.3}) & 1.4(\scriptsize{0.4}) & 0.3(\scriptsize{0.4}) \\
\hline
Rew. & 2.5(\scriptsize{0.7}) & 3.6(\scriptsize{1.2}) & 1.1(\scriptsize{0.7})\\
\end{tabular}
}
\vspace{-3mm}
\caption{Adaptation test in \emph{\agar}. Opponent type is switched half-way per episode. \emph{\#Attack}, \emph{\#Coop.}: episode statistics; \emph{Rew.}: agent reward. Adaptive agents' rewards are close to oracles.
}
\label{tab:Adaptive-Agar}
\vspace{-2mm}
\end{wraptable}

\textbf{\agar:} 
We show the trained agent can choose to cooperate or compete adaptively in the standard setting. We pick 2 cooperative policies (i.e., \emph{Cooperate} preferred, $w$=$[1,0]$) and 2 competitive policies (i.e., \emph{Attack} preferred, $w$=$[1,1]$) and use half of them for training and the other half for testing. For a hard challenge at test time, we switch the opponent within an episode, i.e., we use a cooperative opponent in the first half and then immediately switch to a competitive one, and vice versa. So, a desired policy should adapt quickly at halftime. 
Tab.~\ref{tab:Adaptive-Agar} compares the second-half behavior of the adaptive agent with the oracle pure-competitive/cooperative agents. The rewards of the adaptive agent is close to the oracle: even with half-way switches, the trained policy is able to exploit the cooperative opponent while avoid being exploited by the competitive one.


\vspace{-2mm}
\section{Related Work and Discussions}\label{sec:related}
\vspace{-1mm}
Our core idea is reward perturbation. In game theory, this is aligned with the quantal  response equilibrium~\cite{mckelvey1995quantal}, a smoothed version of NE obtained when payoffs are perturbed by a Gumbel noise. 
In RL, reward shaping \reminder{is popular} for learning desired behavior in various domains~\cite{ng1999policy,babes2008social,devlin2011theoretical}, which inspires our idea for finding diverse \emph{strategic} behavior.  By contrast, state-space exploration methods~\cite{pathak2017curiosity,burda2018exploration,eysenbach2018diversity,Sharma2020Dynamics-Aware} only learn low-level primitives \emph{without} strategy-level diversity~\cite{Baker2020Emergent}. 


RR trains a set of policies, which is aligned with the population-based training in MARL~\cite{jaderberg2017population,jaderberg2019human,vinyals2019grandmaster,Long2020Evolutionary,forestier2017intrinsically}. RR is conceptually related to domain randomization~\cite{tobin2017domain} with the difference that we train separate policies instead of a single universal one, which suffers from mode collapse (see appendix~\ref{app:rew-con}). 
{\algo} is also inspired by the map-elite algorithm~\cite{cully2015robots} from evolutionary learning community, which optimizes multiple objectives simultaneously for sufficiently diverse polices. 
Our work is also related to \citet{forestier2017intrinsically}, which learns a set of policies w.r.t. different fitness functions in the single-agent setting. However, they only consider a restricted fitness function class, i.e., the distance to each object in the environment, which can be viewed as a special case of our setting.
Besides, {\algo} helps train adaptive policies against a set of opponents, which is related to Bayesian games~\cite{dekel2004learning,hartline2015no}. In RL, there are works on learning when to cooperate/compete~\cite{littman2001friend,peysakhovich2018consequentialist,kleiman2016coordinate,woodward2019learning,mckee2020social}, which is a special case of ours, or learning robust policies~\cite{li2019robust,shen2019robust,hu2020other}, which complements our method. 

Although we choose decentralized PG in this paper, RR can be combined with any other multi-agent learning algorithms for games, such as fictitious play~\cite{robinson1951iterative,monderer1996potential,heinrich2016deep,kamra2019deep,han2019deep}, double-oracle~\cite{mcmahan2003planning,lanctot2017unified,wang2019deep,balduzzi2019open} and regularized self-play~\cite{foerster2018learning,perolat2020poincar,bai2020provable}. Many of these works have theoretical guarantees to find an (approximate) NE but there is little work focusing on \emph{which} NE strategy these algorithms can converge to when multiple NEs exist, e.g., the stag-hunt game and its variants, for which many learning dynamics fail to converge to a prevalence of the pure strategy 
Stag~\cite{kandori1993learning,ellison1993learning,fang2002adaptive,skyrms2009dynamic,golman2010individual}.. 

In this paper, we primarily focus on how reward randomization \emph{empirically} helps MARL discover better strategies in practice and therefore only consider stag hunt as a particularly challenging example where an ``optimal'' NE with a high payoff for every agent exists. In general cases, we can select a desired strategy w.r.t. an evaluation function. This is related to the problem of equilibrium refinement (or equilibrium selection)~\cite{selten1965spieltheoretische,selten1975reexamination,myerson1978refinements}, which aims to find a subset of equilibria satisfying desirable properties, e.g., admissibility~\cite{banks1987equilibrium}, subgame perfection~\cite{selten1965spieltheoretische}, Pareto efficiency~\cite{bernheim1987coalition} or robustness against opponent's deviation from best response in security-related applications~\cite{fang2013protecting,an2011refinement}.

\newpage
\subsubsection*{Acknowledgments}
This work is supported by National Key R\&D Program of China (2018YFB0105000). Co-author Fang is supported, in part, by a research grant from Lockheed Martin.  Co-author Wang is supported, in part, by gifts from Qualcomm and TuSimple. The views and conclusions contained in this document are those of the authors and should not be interpreted as representing the official policies, either expressed or implied, of the funding agencies. The authors would like to thank Zhuo Jiang and Jiayu Chen for their support and input during this project. Finally, we particularly thank Bowen Baker for initial discussions and suggesting the Stag Hunt game as our research testbed, which eventually leads to this paper.
\bibliography{reference}
\bibliographystyle{iclr2021_conference}

\newpage
\appendix
We would suggest to visit \url{https://sites.google.com/view/staghuntrpg} for example videos.

\section{Proofs}

\label{sec:proofs}
\begin{proof}[Proof of Theorem 1]
	We apply self-play policy gradient to optimize $\theta_1$ and $\theta_2$.
	Here we consider a projected version, i.e., if at some time $t$, $\theta_1$ or $\theta_2$ $\not \in [0,1]$, we project it to $[0,1]$ to ensure it is a valid distribution.
	
	We first compute the utility given a pair $(\theta_1,\theta_2)$ \begin{align*}
	U_1(\theta_1,\theta_2) = & a\theta_1\theta_2 + c\theta_1(1-\theta_2) + b (1-\theta_1)\theta_2 + d(1-\theta_1)(1-\theta_2)\\
	U_2(\theta_1,\theta_2) = & a\theta_1\theta_2 + b\theta_1(1-\theta_2) + c (1-\theta_1)\theta_2 + d(1-\theta_1)(1-\theta_2).
	\end{align*}
	We can compute the policy gradient \begin{align*}
	\nabla U_1(\theta_1,\theta_2) = &a\theta_2 + c(1-\theta_2) - b\theta_2 - d(1-\theta_2) = (a+d-b-c)\theta_2 + c -d\\
	\nabla U_2(\theta_1,\theta_2) = &a\theta_2 -b\theta_1 + c(1-\theta_1) - d(1-\theta_1) = (a+d-b-c)\theta_1 + c -d
	\end{align*}
	Recall in order to find the optimal solution both $\theta_1$ and $\theta_2$ need to increase.
	Also note that the initial $\theta_1$ and $\theta_2$ determines the final solution.
	In particular, only if  $\theta_1$ and $\theta_2$ are increasing at the beginning, they will converge to the desired solution.
	
	To make \emph{either} $\theta_1$ or $\theta_2$ increase, we need to have \begin{align}
	(a+d-b-c)\theta_1 + c -d > 0 \text{ or } (a+d-b-c)\theta_2 + c -d > 0
	\label{eqn:condition}
	\end{align}
	Consider the scenario $ a-b = \epsilon(d-c)$.
	In order to make Inequality~\eqref{eqn:condition} to hold, we need at least either $\theta_1,\theta_2 \ge \frac{1}{1+\epsilon}$.

	If we initialize $\theta_1 \sim  [0,1]$ and $\theta_2 \sim [0,1]$, the probability of either $\theta_1,\theta_2 \ge \frac{1}{1+\epsilon}$ is $1- \left(\frac{1}{1+\epsilon}\right)^2 = \frac{2\epsilon+\epsilon^2}{1+2\epsilon+\epsilon^2} = O\left(\epsilon\right)$.
\end{proof}




\begin{proof}[Proof of Theorem~\ref{thm:success}]

Using a similar observation as in Theorem~\ref{thm:fail}, we know a \emph{necessary} condition to make PG converge to a sub-optimal NE is \begin{align*}
(a+d-b-c)\theta_1 + c -d < 0 \text{ or } (a+d-b-c)\theta_2 + c -d < 0.
\end{align*}
Based on our generating scheme on $a,b,c,d$ and the initialization scheme on $\theta_1,\theta_2$, we can verify that
Therefore, via a union bound, we know \begin{align}
\mathbb{P}\left((a+d-b-c)\theta_1 + c -d < 0 \text{ or } (a+d-b-c)\theta_2 + c -d < 0 \right) \le 0.6.
\end{align}
Since each round is independent, the probability that PG fails for all $N$ times is upper bounded by $0.6^N$.
Therefore, the success probability is lower bounded by $1-0.6^N = 1-  \exp\left(-\Omega\left(N\right)\right)$.

\end{proof}

\section{Environment Details}\label{app:env-details}
\subsection{\emph{Iterative Stag-Hunt}}
In \emph{Iterative Stag-Hunt}, two agents play 10 rounds, that is, both PPO's trajectory length and episode length are 10. Action of each agent is a 1-dimensional vector, $a_i = \{t_i, i\in\{0,1\}\}$, where $t_i=0$ denotes taking \textit{Stag} action and $t_i=1$ denotes taking \textit{Hare} action. Observation of each agent is actions taking by itself and its opponent in the last round, i.e., $o_i^r = \{a_i^{r-1},a_{1-i}^{r-1}; i\in\{0,1\}\}$, where $r$ denotes the playing round. Note that neither agent has taken action at the first round, so the observation $o_i = \{-1,-1\}$.
\subsection{\emph{Monster-Hunt}}

 In \emph{Monster-Hunt}, two agents can move one step in any of the four cardinal directions (\emph{Up}, \emph{Down}, \emph{Left}, \emph{Right}) at each timestep. Let $a_i=\{t_i,i\in\{0,1\}\}$ denote action of agent $i$, where $t_i$ is a discrete 4-dimensional one-hot vector. The position of each agent can not exceed the border of 5-by-5 grid, where action execution is invalid. One Monster and two apples respawn in the different grids at the initialization. If an agent eats (move over in the grid world) an apple, it can gain 2 points. Sometimes, two agents may try to eat the same apple, the points will be randomly assigned to only one agent. Catching the monster alone causes an agent \textbf{lose} 2 points, but if two agents catch the stag simultaneously, each agent can gain 5 points. At each time step, the monster and apples will respawn randomly elsewhere in the grid world if they are wiped. In addition, the monster chases the agent closest to it at each timestep. The monster may move over the apple during the chase, in this case, the agent will gain the sum of points if it catches the monster and the apple exactly. Each agent's observation $o_i$ is a 10-dimensional vector and formed by concatenating its own position $p_i$, the other agent's position $p_{1-i}$, monster's position$p_{monster}$ and sorted apples' position $p_{apple0}, p_{apple1}$, i.e., $o_i=\{p_i, p_{1-i}, p_{monster}, p_{apple0}, p_{apple1}; i\in\{0,1\}\}$, where $p=(u,v)$ denotes the 2-dimensional coordinates in the gridworld.


\subsection{\emph{Monster-Hunt} with more than 2 agents}\label{app:more-agents}
Here we consider extending {\algo} to the general setting of $N$ agents. In most of the multi-agent games, the reward function are fully symmetric for the same type of agents. Hence, as long as we can formulate the reward function in a linear form over a feature vector and a shared weight, i.e., $R(s,a_1,\ldots,a_N;i)=\phi(s,a_1,\ldots,a_N;i)^Tw$, we can directly apply {\algo} without any modification by setting $\mathcal{R}=\{R_w:R_w(s,a_1,\ldots,a_N;i)=\phi(s,a_1,\ldots,a_N;i)^Tw\}$. Note that typically the dimension of the feature vector $\phi(\cdot)$ remains fixed w.r.t. different number of agents ($N$). For example, in the {\agar} game, no matter how many players are there in the game, the rule of how to get reward bonus and penalties remains the same. 

Here, we experiment {\algo} in \emph{Monster-Hunt} with 3 agents. The results are shown in Fig. \ref{fig:multi_StagHuntGW-iclr}.  We consider baselines including the standard PG (PG) and population-based training (PBT). {\algo} reliably discovers a strong cooperation strategy with a substantially higher reward than the baselines. 

\begin{figure}
    \centering
    \includegraphics[width=0.5\textwidth]{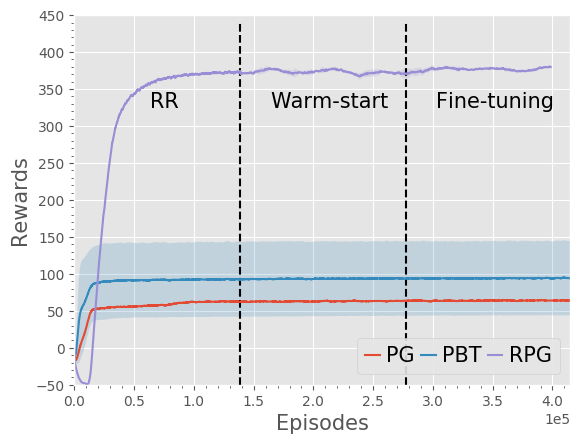}
    \caption{Results on \emph{Monster-Hunt} with 3 agents (3 seeds).}
    \label{fig:multi_StagHuntGW-iclr}
\end{figure}


\subsection{\emph{Escalation}}
In \emph{Escalation}, two agents appear randomly and one grid lights up at the initialization. If two agents step on the lit grid simultaneously, each agent can gain 1 point, and the lit grid will go out with an adjacent grid lighting up. Both agents can gain 1 point again if they step on the next lit grid together. But if one agent steps off the path, the other agent will \textbf{lose} $0.9L$ points, where $L$ is the current length of stepping together, and the game is over. Another option is that two agents choose to step off the path simultaneously, neither agent will be punished, and the game continues. As the length $L$ of stepping together increases, the cost of betrayal increases linearly. $a_i=\{t_i,i\in\{0,1\}\}$ denotes action of agent $i$, where $t_i$ is a discrete 4-dimensional one-hot vector. The observation $a_i$ of agent $i$ is composed of its own position $p_i$, the other agent's position$p_{1-i}$ and the lit grid's position $p_{lit}$, i.e., $o_i=\{p_i, p_{1-i}, p_{lit}; i\in\{0,1\}\}$, where $p=(u,v)$ denotes the 2-dimensional coordinates in the gridworld. Moreover, we utilize GRU to encode the length $L$ implicitly, instead of observing that explicitly.


\subsection{\agar}



\reminder{In the original online game {\agar}, multiple players are limited in a circle petri dish. Each player controls one or more balls using only a cursor and 2 keyboard keys "space" and "w". all balls belonging to the player will move forward to where the cursor pointing at. Balls larger than a threshold will split to 2 smaller balls and rush ahead when the player pressing the key "space". Balls larger than another threshold will emit tiny motionless food-like balls when the player pressing "w". {\agar} has many play modes like "Free-For-All" mode (All players fight for their own and can eat each other) and "Team" mode (Players are separated to two groups. They should cooperate with other players in the same group and eat other players belonging to another group).}

We simplified settings of the original game {\agar}: Now agents don't need to emit tiny motionless balls and all fight with each other (FFA mode). The \textbf{action space} of the game is $target \times \{split, no\_split\}$. $target \in [0, 1]^2$ means the target position that all balls belonging to the agent move to. binary action $split$ or $no\_split$ means whether the player chooses to split, which will cause all balls larger than a threshold split to 2 smaller ones and rush ahead for a short while. These split balls will re-merge after some time, then the agent can split again. When one agent's ball meets another agent's ball and the former one is at least 1.2 times larger than the later, the later will be eaten and the former will get all its mass. The reward is defined as the increment of balls' mass. So every agent's goal is getting larger by eating others while avoid being eaten. But larger ball moves slower. So it's really hard to catch smaller balls only by chasing after it. Split will help, but it needs high accuracy to rush to the proper direction. In our experiments, there were 7 agents interacting with each other. 2 agents were learned by our algorithm and would quit the game if all balls were eaten. 5 agents were controlled by a script and would reborn at a random place if all balls were eaten. Learn-based agents were initialized larger than script-based agents so it was basically one-way catching. In this setting, cooperation was the most efficient behavior for learn-based agents to gain positive reward, where they coordinated to surround script-based agents and caught them.

\textbf{Observation space}: We denote partial observation of agent $i$ as $o_i$, which includes global information of the agent (denoted as $o_{i,global}$) and descriptions of all balls around the agent (including balls owned by the agent, denoted as $o_{i,balls}$. and $o_{i,balls} = \{o_{i,ball,1}, o_{i,ball,2}, ...,o_{i,ball,m}\}$, where $o_{i,ball,j}$ denotes the j-th ball around the agent and there are $m$ observed balls in all). $o_{i,global} = \{l_{i,obs}, w_{i,obs}, p_{i,center}, v_i, s_{i,alive}, n_{i,own}, n_{i,script}, n_{i,other}, a_{i,last}, r_{i,max}, r_{i,min}, m_i \}$ where $l_{i,obs}, w_{i,obs}$ (they are both 1D filled with a real number, from here the form like (1D, real) will be used as the abbreviation) are the length and width of the agent's observation scope, $p_{i,center}$ (2D, real) is its center position, $v_i$ (2D, real) is the speed of its center, $s_{i,alive}$(1D, binary) is whether the other learn-based agent is killed, $n_{i,own}, n_{i,script}, n_{i,other}$(1D, real) are numbers of each type of balls nearby (3 types: belonging to me, or belonging to a script agent, or belonging to another learn-based agent), $a_{i,last}$(3D, real) is the agent's last action, $r_{i,max}, r_{i,min}$(1D, real) are maximal and minimal radius of all balls belonging to the agent. for any $j=1,2,...,m$, $o_{i,ball,j} = \{p_{i,j,relative}, p_{i,j,absolute}, v_{i,j}, v_{i,j,rush}, r_{i,j}, log(r_{i,j}), d_{i,j}, e_{i,j,max}, e_{i,j,min}, s_{i,j,rem}, t_{i,j}\}$, where $p_{i,j,relative}, p_{i,j,absolute}$(2D, real) are the ball's relative and absolute position, $v_{i,j}$ is its speed, $v_{i,j,rush}$ is the ball's additional rushing speed(when a ball splits to 2 smaller balls, these 2 balls will get additional speed and it's called $v_{i,j,rush}$, otherwise $v_{i,j,rush} = \textbf{0}$), $r_{i,j}$(1D, real) is its radius, $d_{i,j}$ is the distance between the ball and the center of the agent, $e_{i,j,max}, e_{i,j,min}$(1D, binary) are whether the ball can be eaten by the maximal or minimal balls of the observing agent, $s_{i,j,rem}$(1D, binary) is whether the ball is able to remerge at present. $t_{i,j}$(3D, one hot) is the type of the ball.

The script-base agent can automatically chase after and split towards other smaller agents. When facing \textit{extreme danger} (we define "\textit{extreme danger}" as larger learn-based agents being very close to it), it will use a 3-step deep-first-search to plan a best way for escape. More details of the script can be seen in our code. We played against the script-base agent using human intelligence for many times and we could never hunt it when having only one ball and rarely catch it by split.

\section{Training Details}

\subsection{Gridworld games}

In \emph{Monster-Hunt} and \emph{Escalation}, agents' networks are organized by actor-critic (policy-value) architecture. We consider $N = 2$ agents with a policy profile $\pi = \{ \pi_0, \pi_1\}$ parameterized by $\theta = \{\theta_0, \theta_1\}$. The policy network $\pi_i$ takes observation $o_i$ as input, two hidden layers with 64 units are followed after that, and then outputs action $a_i$. While the value network takes as input observations of two agents, $o = \{o_0,o_1\}$ and outputs the V-value of agent $i$, similarly two hidden layers with 64 units are added before the output.

In \emph{Escalation}, we also place an additional GRU module before the output in policy network and value network respectively, to infer opponent's intentions from historical information. Note that 64-dimensional hidden state of GRU $h$ will change if the policy network is updated. In order to both keep forward information and use backward information to compute generalized advantage estimate (GAE) with enough trajectories, we split buffer data into small chunks, e.g., $10$ consecutive timesteps as a small data chunk. The initial hidden state $h_{init}$, which is the first hidden state $h_0$, is kept for each data chunk, but do another forward pass to re-compute $\{h_1,...,h_{M-1}\}$, where $M$ represents the length of one data chunk, and keep buffer-reuse  low, e.g., $4$ in practice. 

Agents in \emph{Monster-Hunt} and \emph{Escalation} are trained by PPO with independent parameters. Adam optimizer is used to update network parameters and each experiment is executed for 3 times with random seeds. More optimization hyper-parameter settings are in Tab.\ref{tab:GridWorldHyper}. In addition, \emph{Monster-Hunt} also utilizes GRU modules to infer opponent's identity during adaption training and the parallel threads are set to $64$.


\textbf{Count-based exploration:} We just add the count-based exploration intrinsic reward $r_{int}$ to the environment reward during training. when the agent's observation is $o$, $r_{int} = \alpha / n_o$ where $\alpha$ is a hyperparameter adjusted properly (0.3 in \emph{Monster-Hunt} and 1 in \emph{Escalation}) and $n_o$ is the number of times the agent have the observation $o$.

\textbf{DIAYN:} In \emph{Monster-Hunt}, we use DIAYN to train 10 diverse policy in the first 140k episodes (DIAYN's discriminator has 3 FC layers with 256, 128, 10 units respectively) and choose the policy which has the best performance in \emph{Monster-Hunt}'s reward settings to fine-tune in the next 280k episodes. Note that DIAYN doesn't have a warm-start phase before fine-tuning in its original paper so we didn't do so as well. 
Note that in the first unsupervised learning phase, DIAYN does not optimize for any specific reward function. Hence, we did not plot the reward curve for DIAYN in Fig.7 for this phase. Instead, we simply put a dashed line showing the reward of the best selected pair of policies from DIAYN pretraining. 

\textbf{MAVEN:} We use the open-sourced implementation of MAVEN from {\small{\url{https://github.com/AnujMahajanOxf/MAVEN}}}.

\textbf{Population-based training:} In each PBT trial, we straightforward train the same amount of parallel PG policies as {\algo} with different random seeds in each problem respectively and choose the one with best performance as the final policy. Note that the final training curve is averaged over 3 PBT trials.

\begin{table}
\centering
\vspace{2mm}
\begin{tabular}{cc}
\toprule
Hyper-parameters       & Value \\
\midrule 
Initial learning rate  & 1e-3  \\
Minibatch size         & 320 chunks of 10 timesteps \\
Adam stepsize ($\epsilon$) & 1e-5  \\
Discount rate ($\gamma$)  & 0.99  \\
GAE parameter ($\lambda$) & 0.95  \\
Value loss coefficient & 1     \\
Entropy coefficient    & 0.01 \\
Gradient clipping      & 0.5   \\
PPO clipping parameter & 0.2   \\
Parallel threads       & 64(\emph{Escalation}),256(\emph{Monster-Hunt}) \\
PPO epochs             & 4   \\
reward scale parameter & 0.1  \\
episode length         & 50    \\ 
\bottomrule
\end{tabular}
\vspace{2mm}
\caption{PPO hyper-parameters used in Gridworld games, learning rate is linearly annealed during training.}
\label{tab:GridWorldHyper}
\vspace{-5mm}
\end{table}

\subsection{\agar}

In {\agar}, we used PPO as our algorithm and agents' networks were also organized by actor-critic (policy-value) architecture with a GRU unit (i.e., PPO-GRU). We consider $N = 2$ agents with a policy profile $\pi = \{ \pi_0, \pi_1\}$ sharing parameter $\theta$. The policy network $\pi_i$ takes observation $o_i$ as input. At the beginning, like \cite{baker2019emergent}, $o_{i,balls}$ is separated to 3 groups according to balls' types: $o_{i,own balls}$, $o_{i,script balls}$ and $o_{i,other balls}$. 3 different multi-head attention models with 4 heads and 64 units for transformation of keys, inquiries and values are used to embed information of 3 types of balls respectively, taking corresponding part of $o_{i,balls}$ as values and inquiries and $o_{i,global}$ as keys. Then their outputs are concatenated and transformed by an FC layer with 128 units before being sent to a GRU block with 128 units. After that, the hidden state is copied to 2 heads for policy's and value's output. The policy head starts with 2 FC layers both with 128 units and ends with 2 heads to generate discrete(\textit{split} or \textit{no\_split}) and continuous(\textit{target}) actions. The value head has 3 FC layers with 128, 128, 1 unit respectively and outputs a real number.

PPO-GRU was trained with 128 parallel environment threads. \agar's episode length was uniform-randomly sampled between 300 and 400 both when training and evaluating. Buffer data were split to small chunks with $length = 32$ in order to diversify training data and stabilize training process. and the buffer was reused for 4 times to increase data efficiency. Hidden states of each chunk except at the beginning were re-computed after each reuse to sustain PPO's "on-policy" property as much as possible. Action was repeated for 5 times in the environment whenever the policy was executed and only the observation after the last action repeat was sent to the policy. Each training process started with a curriculum-learning in the first $1.5e7$ steps: Speed of script agents was multiplied with $x$, where $x$ is uniformly random-sampled between $max\{0, (n-1e7)/5e6\}$ and $
min\{1, max\{0, (n-5e6)/5e6\}\}$ at the beginning of each episode, where $n$ was the steps of training. After the curriculum learning, Speed was fixed to the standard. Each experiment was executed for 3 times with different random seeds. Adam optimizer was used to update network parameters. More optimization hyper-parameter settings are in Tab.\ref{tab:agarhp}. 


\begin{table}
\centering
\vspace{2mm}
\begin{tabular}{cc}
\toprule
Hyper-parameters       & Value \\
\midrule 
Learning rate  & 2.5e-4  \\
Minibatch size         & 2 * 512 chunks of 32 timesteps\\
Adam stepsize ($\epsilon$) & 1e-5  \\
Discount rate ($\gamma$)  & 0.995  \\
GAE parameter ($\lambda$) & 0.95  \\
Value loss coefficient & 0.5     \\
action loss coefficient & 1     \\
Entropy coefficient    & 0.01(discrete), 0.0025(continuous) \\
Gradient clipping      & 20   \\
PPO clipping parameter & 0.1   \\
Parallel threads       & 128 \\
PPO epochs             & 4   \\
episode length         & 128    \\ 
\bottomrule
\end{tabular}
\caption{PPO hyper-parameters used in {\agar}}
\label{tab:agarhp}
\vspace{-2mm}
\end{table}

\section{Additional Experiment Results}

\subsection{\emph{Monster-Hunt}}
In \emph{Monster-Hunt}, we set $C_{\max}=5$ for sampling $w$. Fig.~\ref{fig:StagHuntGW-RR} illustrates the policies discovered by several selected $w$ values, where different strategic modalities can be clearly observed: e.g., with $w=[0,5,0]$, agents always avoid monsters and only eat apples. In Fig.~\ref{fig:StagHuntGW-Eval}, it's worth noting that $w=[5,0,2]$ could yield the best policy profile (i.e., two agents move together to hunt the monster.) and doesn't even require further fine-tuning with some seeds. But 
the performance of $w=[5,0,2]$ is significantly unstable and it may converge to another NE (i.e., two agents move to a corner and wait for the monster.) with other seeds. So $w=[5,0,5]$, which yields stable strong cooperation strategies with different seeds, will be chosen in RR phase when $w=[5,0,2]$ performs poorly. We demonstrate the obtained rewards from different policies in Fig.~\ref{fig:StagHuntGW-Eval}, where the policies learned by {\algo} produces the highest rewards.

\begin{figure*}[bt]
	\centering
	\includegraphics[width=1.0\textwidth]{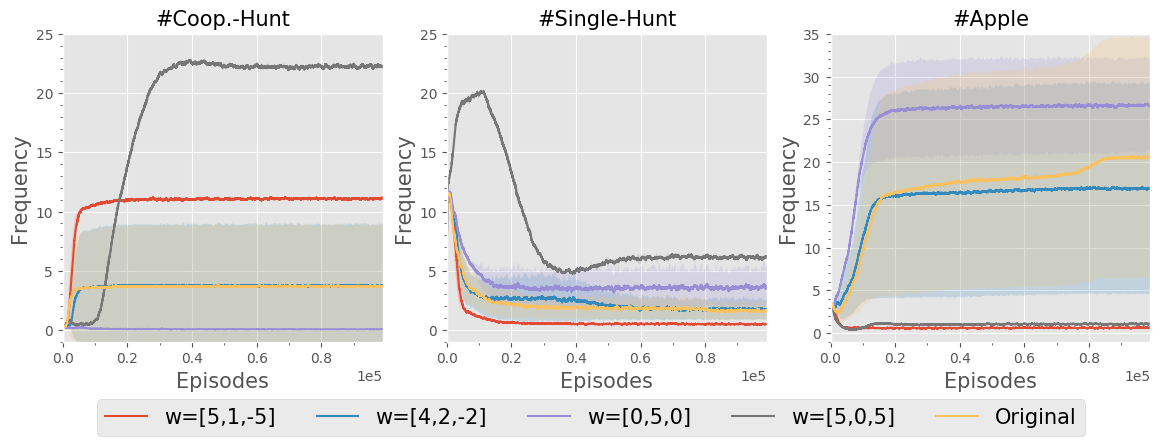}
	\caption{Statistics of different policy profiles in \emph{Monster-Hunt}.\#Coop.-Hunt: frequency of both agents catching the monster; \#Single-Hunt: frequency of agents meeting the monster alone; \#Apple: apple frequency.}
	\label{fig:StagHuntGW-RR}
	\vspace{-2mm}
\end{figure*}

\begin{figure*}[bt]
    \vspace{-2mm}
	\centering
	\includegraphics[width=0.5\textwidth]{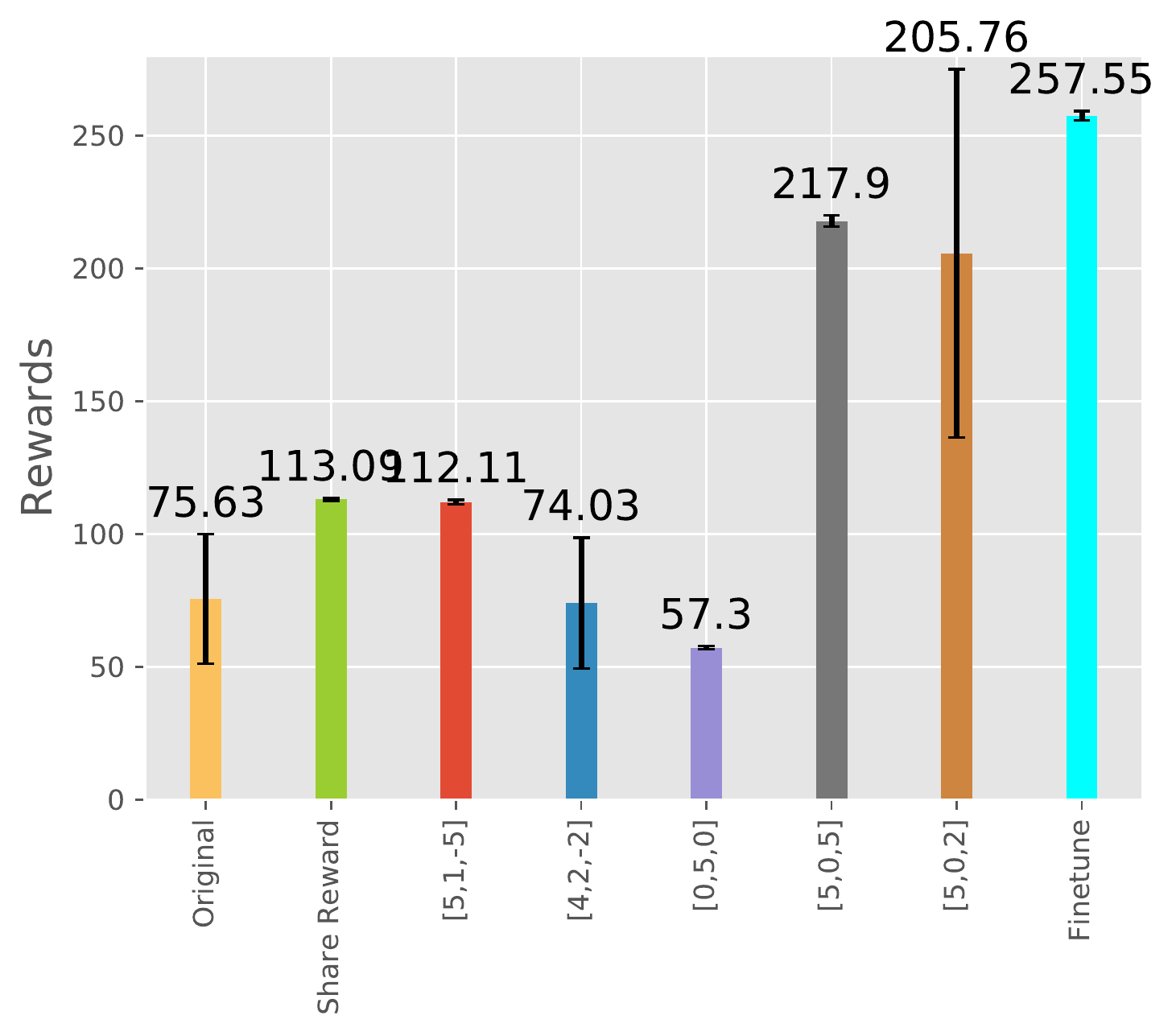}
	\caption{Results in original \emph{Monster-Hunt}. Original: PG in the original game; Share reward: PG with shared reward in the original game; Finetune: fine-tuning the best policy obtained in the RR phase and yielding the highest reward in the original game.}
	\label{fig:StagHuntGW-Eval}
	\vspace{-1mm}
\end{figure*}

\begin{figure}[ht]
    \vspace{-2mm}
	\centering
	\subfigure[\#Split]{
		\label{fig:splitdp} 
		\includegraphics[width=0.4\textwidth]{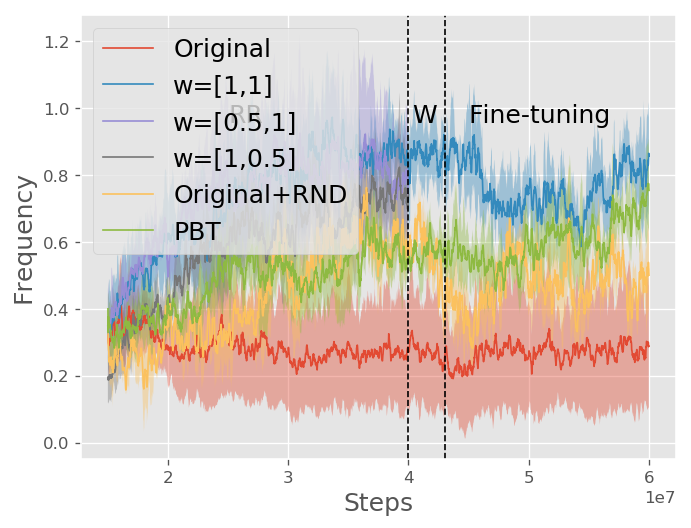}
	}
	\subfigure[\#Hunt]{
		\label{fig:huntdp} 
		\includegraphics[width=0.4\textwidth]{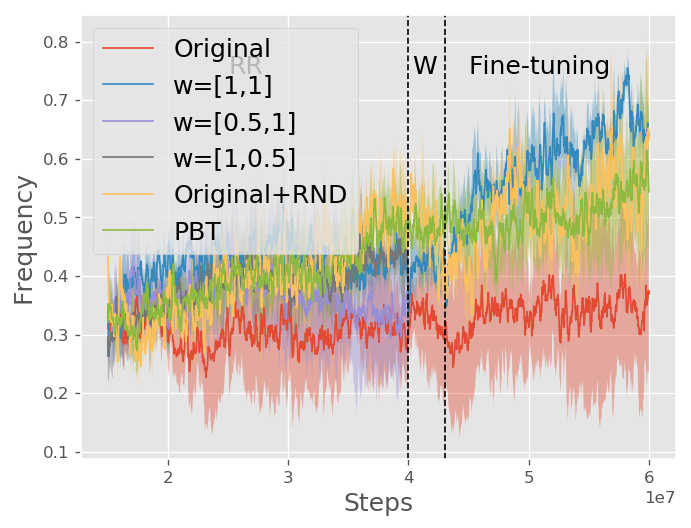}
    }
	\subfigure[\#Attack]{
		\label{fig:attackdp} 
		\includegraphics[width=0.4\textwidth]{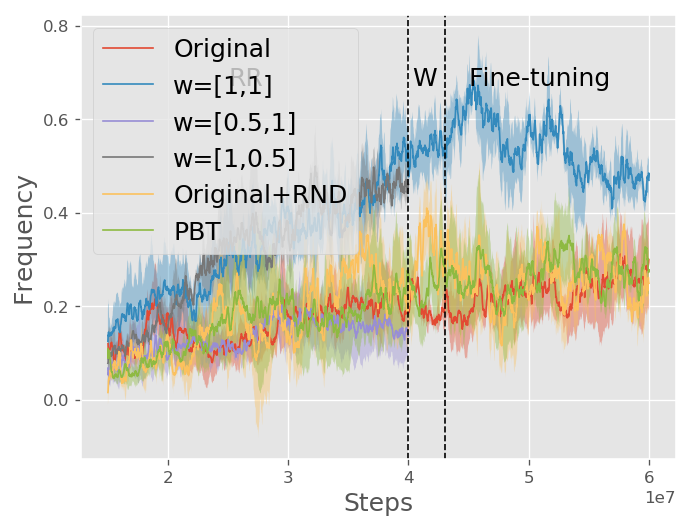}
	}
	\subfigure[\#Cooperate]{
		\label{fig:cooperatedp} 
		\includegraphics[width=0.4\textwidth]{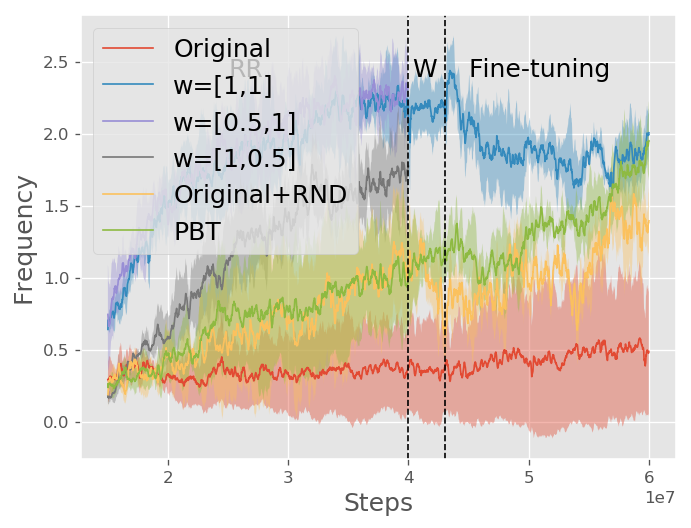}
	}
	\subfigure[Reward]{
		\label{fig:RTdp} 
		\includegraphics[width=0.4\textwidth]{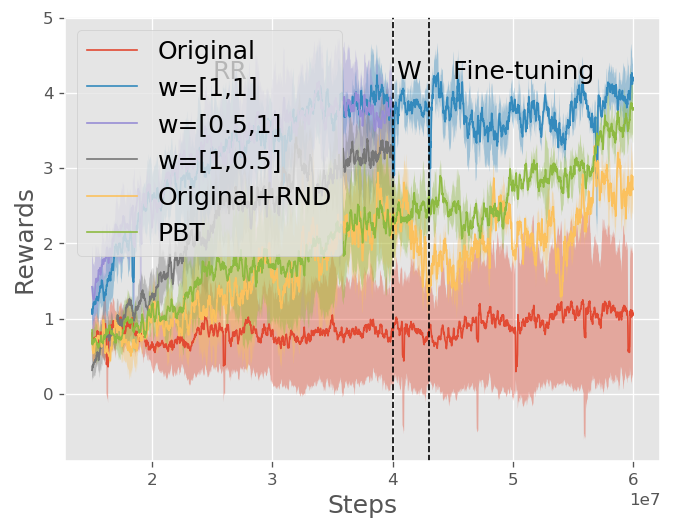}
	}
	\vspace{-2mm}
	\caption{statistics of \textit{standard setting} of \agar. (a) to (d) illustrate frequencies of \textit{Split}, \textit{Hunt}, \textit{Attack} and \textit{Cooperate} during training under different reward parameters and algorithms. \textit{Split} means catching a script agent ball by splitting, \textit{Hunt} means catching a script agent ball without splitting, \textit{Attack} means catching a learn-based agent ball, \textit{Cooperate} means catching a script agent ball while the other learn-based agent is close by.(the same below) (e) illustrates rewards of different policies.}
	\label{fig:dp}
	\vspace{-2mm}
\end{figure}

\begin{figure}[ht]
    \vspace{-2mm}
	\centering
	\subfigure[\#Split]{
		\includegraphics[width=0.4\textwidth]{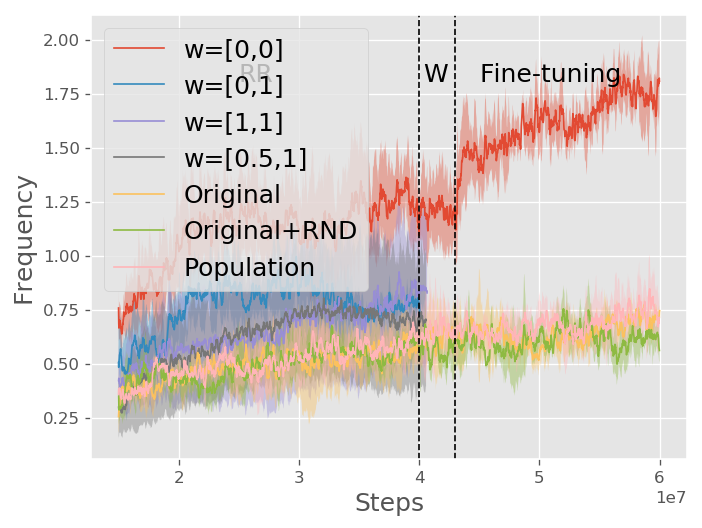}
	}
	\subfigure[\#Hunt]{
		\includegraphics[width=0.4\textwidth]{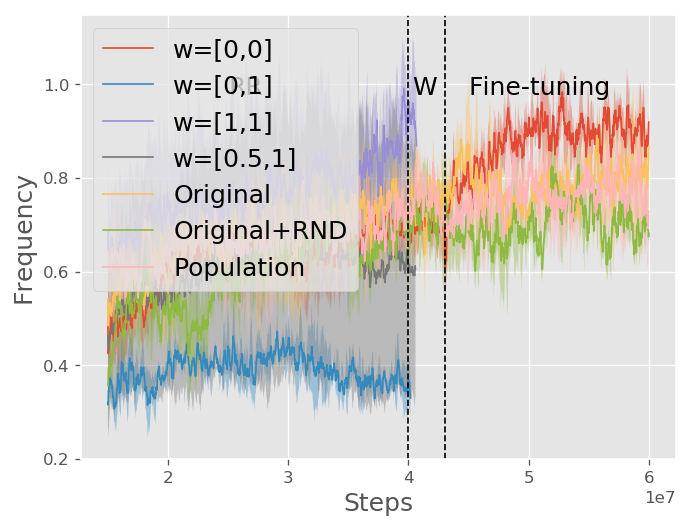}
	}
	\subfigure[\#Attack]{
		\includegraphics[width=0.4\textwidth]{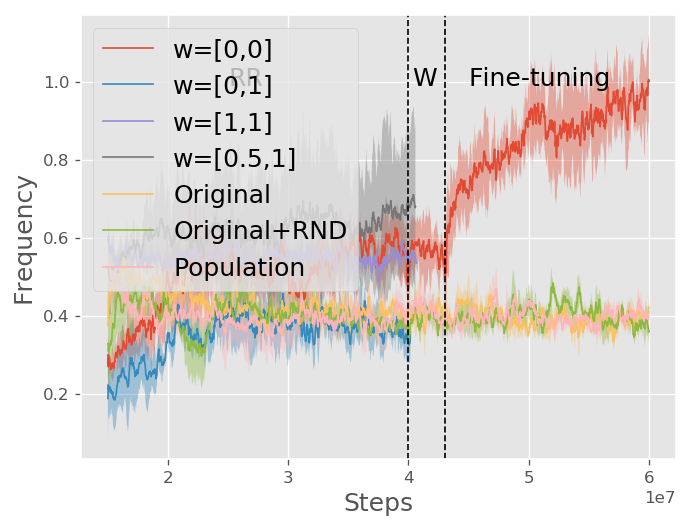}
	}
	\subfigure[\#Cooperate]{
		\includegraphics[width=0.4\textwidth]{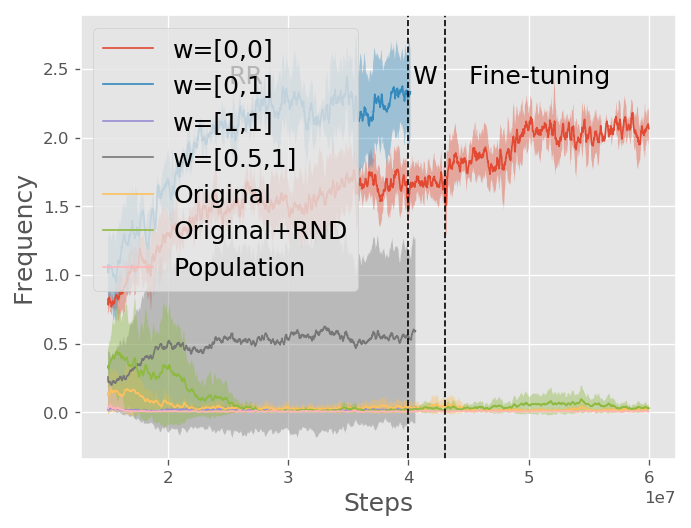}
	}
	\subfigure[Reward]{
		\includegraphics[width=0.4\textwidth]{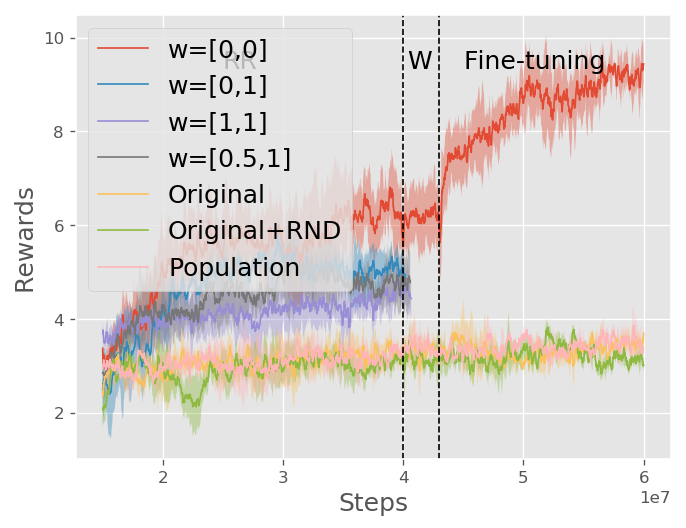}
	}
	\vspace{-2mm}
	\caption{Statistics of \textit{aggressive setting} of \agar. (a) to (d) illustrate frequencies of \textit{Split}, \textit{Hunt}, \textit{Attack} and \textit{Cooperate} during training under different reward parameters and algorithms.(e) illustrates rewards of different policies.}
	\label{fig:ndp}
	\vspace{-5mm}
\end{figure}

\subsection{\agar}

\subsubsection{Standard setting}

We sampled 4 different $w$ and they varied in different degrees of cooperation. We also did experiments using only baseline PG or PG with intrinsic reward generated by \textit{Random Network distillation} (RND) to compare with \algo. RR lasted for 40M steps, but only the best reward parameter in RR ($w = [1, 1]$) was warmed up for 3M steps and fine-tuned for 17M steps later. PG and RND were also trained for 60M steps in order to compare with \algo fairly. In Fig.~\ref{fig:dp}, we can see that PG and RND produced very low rewards because they all converged to non-cooperative policies. $w = [1, 1]$ produced highest rewards after \textbf{RR}, and rewards boosted higher after fine-tuning.

\subsubsection{Aggressive setting}

We sampled 5 different $w$ and their behavior were much more various. the other training settings were the same as \textit{standard setting}. in Fig.~\ref{fig:ndp}, we should notice that simply sharing reward ($w=[1,1])$ didn't get very high reward because attacking each other also benefits each other, so 2 agents just learned to sacrifice, Again, Fig.~\ref{fig:ndp} illustrates that rewards of {\algo} was far ahead the other policies while both PG and PG+RND failed to learn cooperative strategies.

We also listed all results of \textit{Standard} and \textit{Aggressive setting} in Tab.~\ref{tab:Agar} for clearer comparison.

\subsubsection{Universal reward-conditioned policy}
\label{app:rew-con}

We also tried to train a universal policy conditioned on $w$ by randomly sampling different $w$ at the beginning of each episode during training rather than fixing different $w$ and training the policy later on. But as Fig.~\ref{fig:rab} illustrates, the learning process was very unstable and model performed almost the same under different $w$ due to the intrinsic disadvantage of an on-policy algorithm dealing with multi-tasks: the learning algorithm may pay more effort on $w$ where higher rewards are easier to get but ignore the performance on other $w$, which made it very hard to get diverse behaviors.

\begin{figure}[ht]
	\centering
	\subfigure[\#Split]{
		\includegraphics[width=0.4\textwidth]{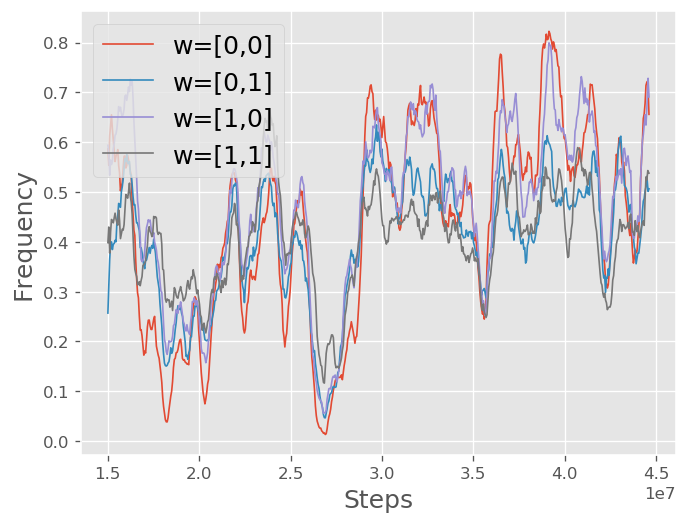}
	}
	\subfigure[\#Hunt]{
		\includegraphics[width=0.4\textwidth]{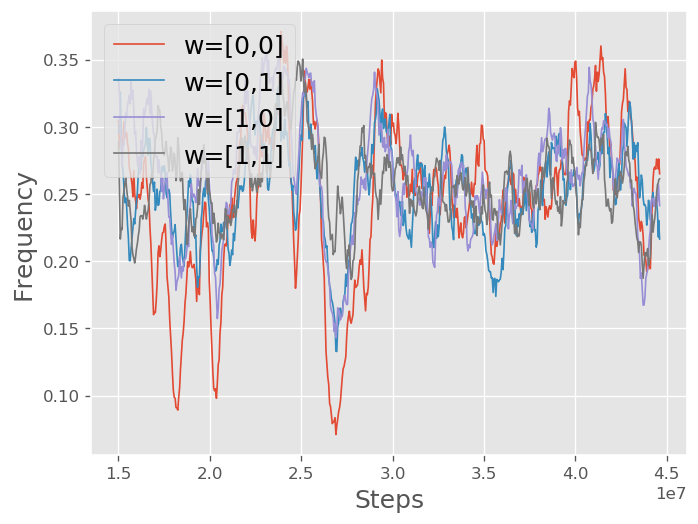}
	}
	\subfigure[\#Attack]{
		\includegraphics[width=0.4\textwidth]{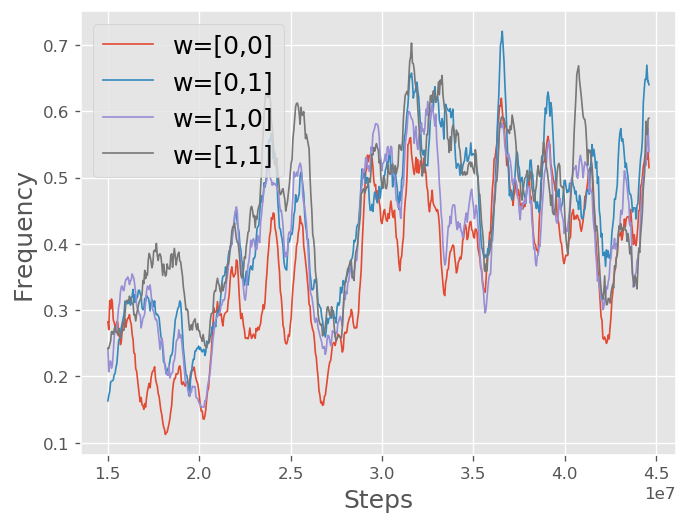}
	}
	\subfigure[\#Cooperate]{
		\includegraphics[width=0.4\textwidth]{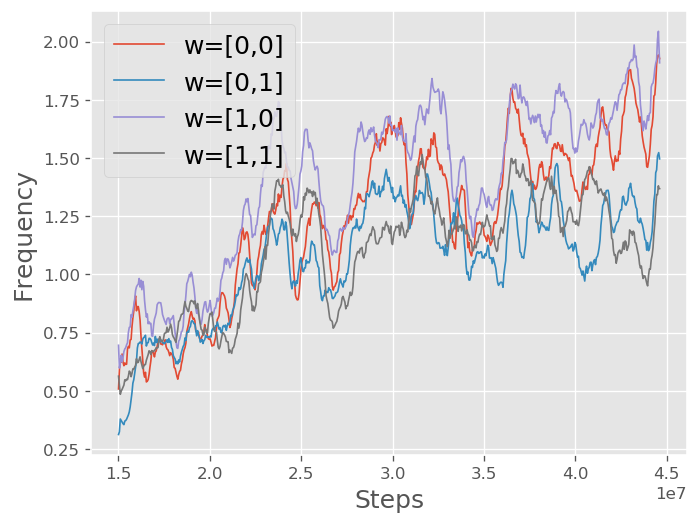}
	}
	\vspace{-2mm}
	\caption{Statistics of Universal policy of \agar. (a) to (d) illustrate the frequency of \textit{Split}, \textit{Hunt}, \textit{Attack} and \textit{Cooperate} when fixing different $w$ while evaluating.}
	\label{fig:rab}
	\vspace{-1mm}
\end{figure}

\begin{table}
\vspace{-2mm}
\begin{tabular}{c|cccccc}
\toprule
Settings & Policy & Rewards & \#Split & \#Hunt & \#Attack & \#Cooperate \\
\midrule 

\multirow{7}{*}{Standard} & $w$=[1,1] & 3.843\scriptsize{(0.23)} & 0.859\scriptsize{(0.083)} & 0.411\scriptsize{(0.034)} & 0.526\scriptsize{(0.064)} & 2.203\scriptsize{(0.136)}\\ 
& RPG & 4.34\scriptsize{(0.171)} & 0.971\scriptsize{(0.13)} & 0.659\scriptsize{(0.048)} & 0.548\scriptsize{(0.038)} & 2.028\scriptsize{(0.297)}\\ 
& $w$=[0.5,1] & 3.827\scriptsize{(0.489)} & 0.807\scriptsize{(0.192)} & 0.365\scriptsize{(0.106)} & 0.15\scriptsize{(0.064)} & 2.342\scriptsize{(0.286)}\\ 
& $w$=[1,0.5] & 3.174\scriptsize{(0.653)} & 0.718\scriptsize{(0.148)} & 0.432\scriptsize{(0.026)} & 0.458\scriptsize{(0.031)} & 1.716\scriptsize{(0.418)}\\ 
& Original & 1.08\scriptsize{(0.836)} & 0.3\scriptsize{(0.19)} & 0.361\scriptsize{(0.134)} & 0.291\scriptsize{(0.098)} & 0.483\scriptsize{(0.442)}\\
& RND & 2.789\scriptsize{(0.346)} & 0.499\scriptsize{(0.061)} & 0.623\scriptsize{(0.128)} & 0.242\scriptsize{(0.037)} & 1.349\scriptsize{(0.164)}\\
& PBT & 3.822\scriptsize{(0.347)} & 0.744\scriptsize{(0.129)} & 0.585\scriptsize{(0.146)} & 0.297\scriptsize{(0.055)} & 1.935\scriptsize{(0.167)}\\
\midrule
\multirow{8}{*}{Aggressive} & $w$=[0,0] & 5.966\scriptsize{(0.539)} & 1.195\scriptsize{(0.155)} & 0.699\scriptsize{(0.008)} & 0.517\scriptsize{(0.066)} & 1.603\scriptsize{(0.127)}\\ 
& RPG & 8.907\scriptsize{(0.292)} & 1.655\scriptsize{(0.138)} & 0.862\scriptsize{(0.053)} & 0.903\scriptsize{(0.081)} & 2.039\scriptsize{(0.209)}\\ 
& $w$=[0,1] & 5.066\scriptsize{(0.375)} & 0.785\scriptsize{(0.041)} & 0.344\scriptsize{(0.049)} & 0.346\scriptsize{(0.058)} & 2.327\scriptsize{(0.311)}\\ 
& $w$=[1,1] & 4.622\scriptsize{(0.277)} & 0.836\scriptsize{(0.304)} & 0.934\scriptsize{(0.108)} & 0.552\scriptsize{(0.019)} & 0.028\scriptsize{(0.023)}\\ 
& $w$=[0.5,1] & 4.79\scriptsize{(0.588)} & 0.678\scriptsize{(0.31)} & 0.617\scriptsize{(0.28)} & 0.67\scriptsize{(0.194)} & 0.55\scriptsize{(0.643)}\\
& Original & 3.551\scriptsize{(0.121)} & 0.717\scriptsize{(0.032)} & 0.812\scriptsize{(0.078)} & 0.412\scriptsize{(0.018)} & 0.027\scriptsize{(0.026)}\\ 
& RND & 3.189\scriptsize{(0.154)} & 0.626\scriptsize{(0.065)} & 0.705\scriptsize{(0.008)} & 0.382\scriptsize{(0.029)} & 0.035\scriptsize{(0.027)}\\ 
& PBT & 3.348\scriptsize{(0.222)} & 0.697\scriptsize{(0.133)} & 0.732\scriptsize{(0.096)} & 0.396\scriptsize{(0.014)} & 0.007\scriptsize{(0.005)}\\

\bottomrule
\end{tabular}
\vspace{2mm}
\caption{Frequencies of 4 types of events and rewards of different policies of {\agar} after completely training.}
\label{tab:Agar}
\vspace{-8mm}
\end{table}

\subsection{Learn Adaptive Policy}
In this section, we add the opponents' identity $\psi$ in the input of the value network to stable the training process and boost the performance of the adaptive agent. $\psi$ is a $C$-dimensional one-hot vector, where $C$ denotes the number of opponents.

\subsubsection{\emph{Iterative Stag-Hunt}}
In \emph{Iterative Stag-Hunt}, we randomize the payoff matrix, which is a 4-dimensional vector, and set $C_{max}=4$ for sampling $w$. The parallel threads are $512$ and the episode length is $10$. Other training hyper-parameter settings are the same as Tab.\ref{tab:GridWorldHyper}. Fig ~\ref{fig:StagHunt-RR} describes different $w=[a,b,c,d]$ (i.e., $[4,0,0,0],[0,0,0,4],[0,4,4,0],[4,1,4,0]$) yields different policy profiles. e.g., with $w=[0,0,0,4]$, both agents tend to eat the hare. The original game corresponds to $w=[4,3,-50,1]$. Tab.~\ref{tab:StagHunt-Eval} reveals $w=[4,0,0,0]$ yields the highest reward and reaches the optimal NE without further fine-tuning.
\begin{figure}[ht]
	\centering
	\subfigure[\#Stag-Stag]{
		\label{fig:StagHunt-RR-Stag-Stag} 
		\includegraphics[width=0.4\textwidth]{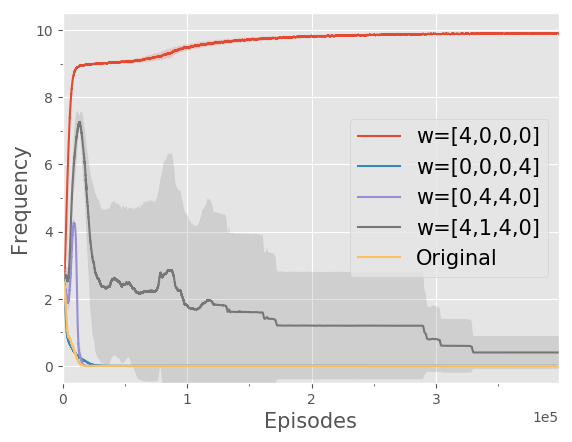}
	}
	\subfigure[\#Stag-Hare]{ \label{fig:StagHunt-RR-Stag-Hare} 
		\includegraphics[width=0.4\textwidth]{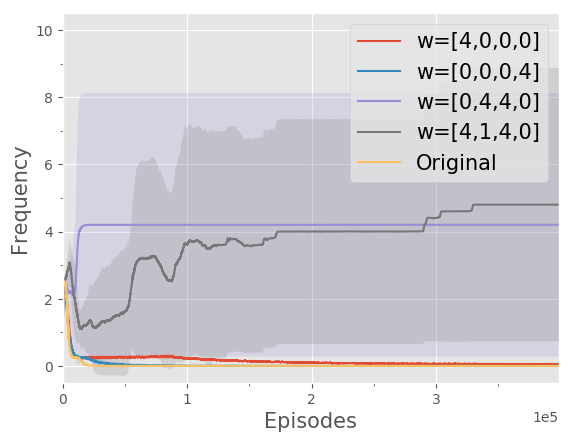}
	} 
	\subfigure[\#Hare-Stag]{ \label{fig:StagHunt-RR-Hare-Stag} 
		\includegraphics[width=0.4\textwidth]{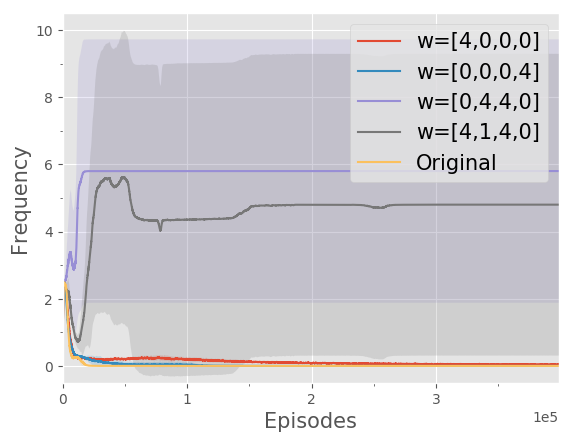}
	}
	\subfigure[\#Hare-Hare]{ \label{fig:StagHunt-RR-Hare-Hare} 
		\includegraphics[width=0.4\textwidth]{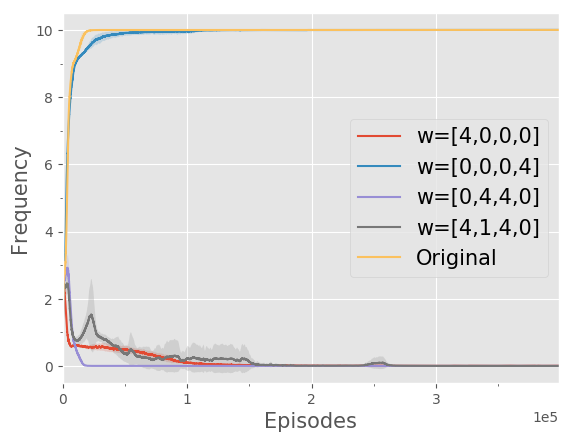}
	}
	\vspace{-2mm}
	\caption{Find different policy profiles via Reward Randomization in \emph{Iterative Stag-Hunt}. \#Stag-Stag: the frequency of two agents both hunt the stag. \#Stag-Hare: the frequency of agent1 hunts the stag while agent2 eats the hare. \#Hare-Stag: the frequency of agent1 eat the hare while agent2 hunts the stag. \#Hare-Hare: the frequency of two agents both eat the hare. Frequency: times of certain behavior performed in one episode.}
	\label{fig:StagHunt-RR} 
	\vspace{-1mm}
\end{figure}

\begin{table}
\vspace{-2mm}
\centering
\begin{tabular}{cccccc}
\toprule
     & Original & $w=[4,0,0,0]$ & $w=[0,0,0,4]$ & $w=[0,4,4,0]$ & $w=[4,1,4,0]$ \\ 
     \midrule
\#Rewards         & 
20.00(\scriptsize{0.00}) & 
74.76(\scriptsize{2.88}) & 
20.00(\scriptsize{0.00})   & 
-470.0(\scriptsize{0.00}) &
-453.45(\scriptsize{0.25})\\
\bottomrule
\end{tabular}
\vspace{2mm}
\caption{Evaluation of different policy profiles obtained via RR in original \emph{Iterative Stag-Hunt}. Note that $w=[4,0,0,0]$ has the best performance among the policy profiles, and is the optimal NE with no further fine-tuning.}
\label{tab:StagHunt-Eval}
\vspace{-8mm}
\end{table}

Utilizing 4 different strategies obtained in the RR phase as opponents, we could train an adaptive policy which can make proper decisions according to opponent's identity. Fig.~\ref{fig:StagHunt-Adaptive-train} shows the adaption training curve, we can see that the policy yields adaptive actions stably after $5e4$ episodes. At the evaluation stage, we introduce 4 hand-designed opponents to test the performance of the adaptive policy, including \emph{Stag} opponent (i.e., always hunt the stag), \emph{Hare} opponent (i.e., always eat the hare), \emph{Tit-for-Tat (TFT)} opponent (i.e., always hunt the stag at the first step, and then take the action executed by the other agent in the last step), and \emph{Random} opponent (i.e., randomly choose to hunt the stag or eat the hare at each step). Tab.~\ref{tab:Adaptive-StagHunt} 
illustrates that the adaptive policy exploits all hand-designed strategies, including Tit-for-Tat opponent, which significantly differ from the trained opponents. 
\begin{figure}[ht]
    \vspace{-2mm}
	\centering
	\subfigure[\#Stag-Stag]{
		\label{fig:StagHunt-Adaptive-Stag-Stag} 
		\includegraphics[width=0.4\textwidth]{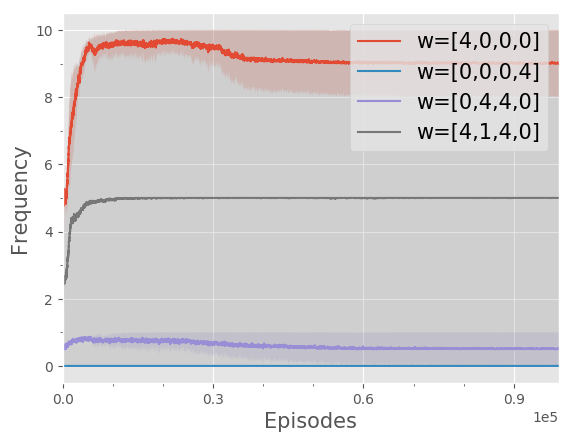}
	}
	\subfigure[\#Stag-Hare]{ \label{fig:StagHunt-Adaptive-Stag-Hare} 
		\includegraphics[width=0.4\textwidth]{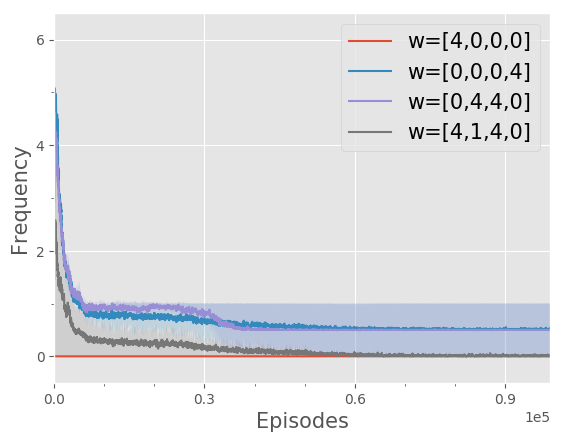}
	} 
	\subfigure[\#Hare-Stag]{ \label{fig:StagHunt-Adaptive-Hare-Stag} 
		\includegraphics[width=0.4\textwidth]{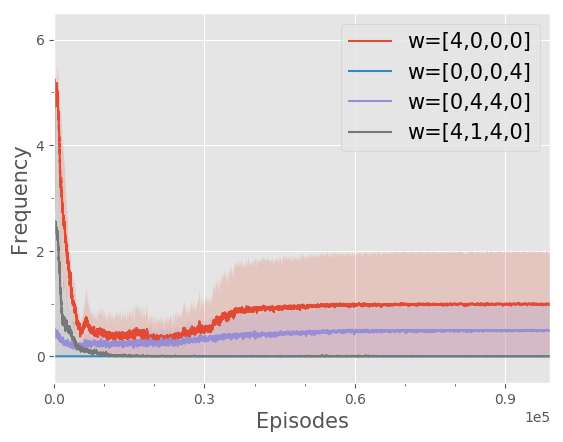}
	}
	\subfigure[\#Hare-Hare]{ \label{fig:StagHunt-Adaptive-Hare-Hare} 
		\includegraphics[width=0.4\textwidth]{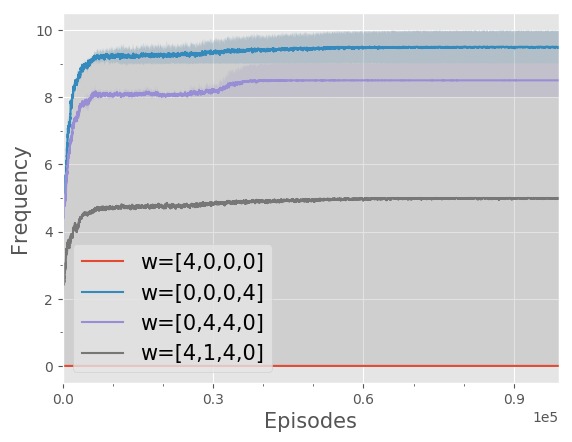}
	}
	\vspace{-2mm}
	\caption{Adaption training curve in \emph{Iterative Stag-Hunt}. \#Stag-Stag: frequency of both agents hunting the stag. \#Stag-Hare: frequency of agent1 hunting the stag while agent2 eating the hare. \#Hare-Stag: frequency of agent1 eating the hare while agent2 hunting the stag. \#Hare-Hare: frequency of both agents eating the hare. Frequency: times of certain behavior performed in one episode.}
	\label{fig:StagHunt-Adaptive-train} 
	\vspace{-2mm}
\end{figure}

\begin{table}
\vspace{-2mm}
\centering
\begin{tabular}{ccccc}
\toprule
Oppo. Type & Stag & Hare & TFT & Random \\ \midrule
\#Stag         & 9.31(\scriptsize{0.77}) & 3.6(\scriptsize{4.33})      & 7.31(\scriptsize{3.82})   & 5.35(\scriptsize{3.48})      \\ 
\midrule
\#Hare         & 0.69(\scriptsize{0.77})    & 6.4(\scriptsize{4.33})      & 2.69(\scriptsize{3.81})   & 4.65(\scriptsize{3.48})     \\
\bottomrule
\end{tabular}
\vspace{2mm}
\caption{Statistics of the adaptive policy in \emph{Iterative Stag-Hunt} with 4 hand-designed opponents with different behavior preferences. \#Stag: the adaptive agent hunts the stag; \#Hare: the adaptive agent eats the hare; The adaptive policy successfully exploits different opponents, including cooperating with TFT opponent, which is totally different from trained opponents.}
\label{tab:Adaptive-StagHunt}
\vspace{-5mm}
\end{table}

\subsubsection{\emph{Monster-Hunt}}
We use the policy population $\Pi_2$ trained by 4 $w$ values (i.e., $w=[5,1,-5],w=[4,2,-2]$,$w=[0,5,0]$,$w=[5,0,5]$) in the RR phase as opponents for training the adaptive policy. In addition, we sample other 4 $w$ values (i.e., $w=[5,0,0],w=[-5,5,-5]$,$w=[-5,0,5]$,$w=[5,-5,5]$) from $C_{max}=5$ to train new opponents for evaluation. Fig.~\ref{fig:StagHuntGW-Adaptive-stats} shows the adaption training curve of the monster-hunt game, where the adaptive policy could take actions stably according to the opponent's identity.

\begin{figure}[ht]
    \vspace{-1mm}
	\centering
	\subfigure[\#Coop.-Hunt]{
		\label{fig:StagHuntGW-Adaptive-Monster-Monster} 
		\includegraphics[width=0.31\textwidth]{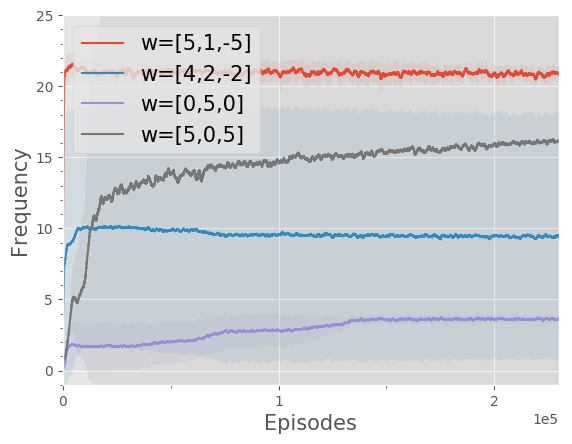}
	}
	\subfigure[\#Single-Hunt]{ \label{fig:StagHuntGW-Adaptive-Agent-Monster-Alone} 
		\includegraphics[width=0.31\textwidth]{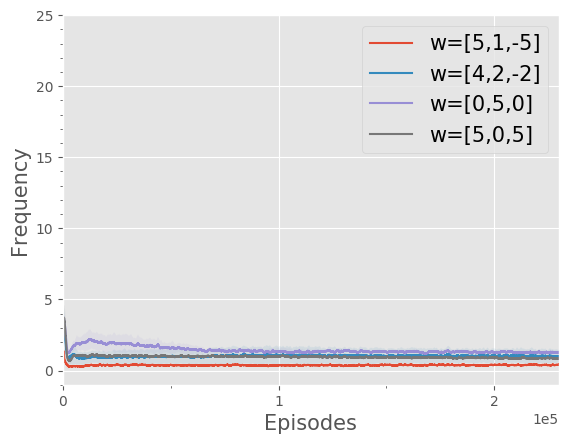}
	} 
	\subfigure[\#Apple]{ \label{fig:StagHuntGW-Adaptive-Agent-Apple} 
		\includegraphics[width=0.31\textwidth]{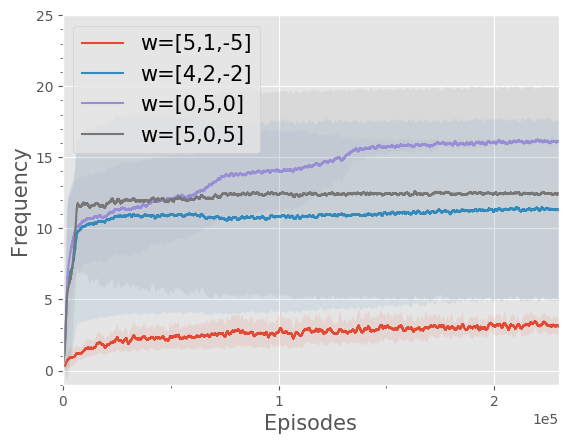}
	}
	\vspace{-2mm}
	\caption{Adaption training statistics of \emph{Monster-Hunt}. \#Coop.-Hunt: frequency of both agents catching the monster; \#Single-Hunt: the adaptive agent meets the monster alone; \#Apple: apple frequency.}
	\label{fig:StagHuntGW-Adaptive-stats}
	\vspace{-3mm}
\end{figure}

\subsubsection{\agar}

In {\agar}, we used 2 types of policies from RR: $w=[1,0]$ (i.e. cooperative) and $w=[0,1]$ (i.e. competitive) as opponents, and trained a adaptive policy facing each opponent with probability=$50\%$ in \textit{standard setting} while only its value head could know the opponent's type directly. Then we supposed the policy could cooperate or compete properly with corresponding opponent. As Fig.~\ref{fig:ad} illustrates, the adaptive policy learns to cooperate with cooperative partners while avoid being exploited by competitive partners and exploit both partners.

\textbf{More details about training and evaluating process:} 
Oracle pure-cooperative policies are learned against a competitive policy for 4e7 steps. So do oracle pure-competitive policies. And the adaptive policy is trained for 6e7 steps. the length of each episode is 350 steps (the half is 175 steps). When evaluating, The policy against the opponent was the adaptive policy in first 175 steps whatever we are testing adaptive or oracle policies. When we tested adaptive policies, the policy against the opponent would keep going for another 175 steps while the opponent would changed to another type and its hidden state would be emptied to zero. When we tested oracle policies, the policy against the opponent would turn to corresponding oracle policies and the opponent would also changed its type while their hidden states were both emptied.

\begin{figure}[ht]
    \vspace{-5mm}
	\centering
	\subfigure[\#Cooperate]{
		\label{fig:cooperatead} 
		\includegraphics[width=0.4\textwidth]{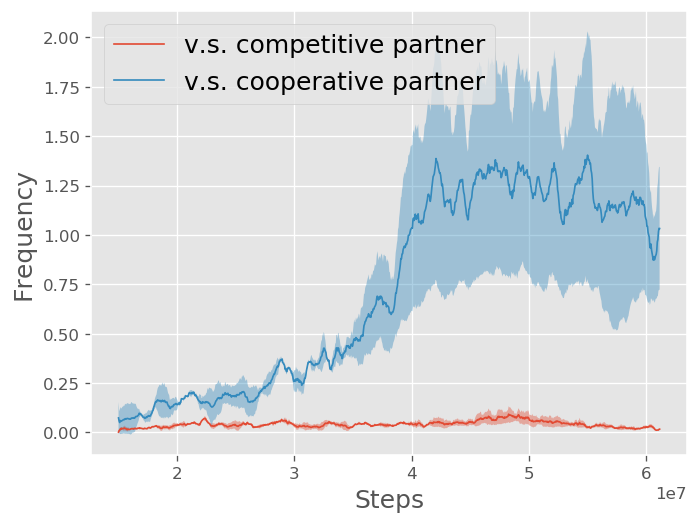}
	}
	\subfigure[\#Attack]{
		\label{fig:attackad}
		\includegraphics[width=0.4\textwidth]{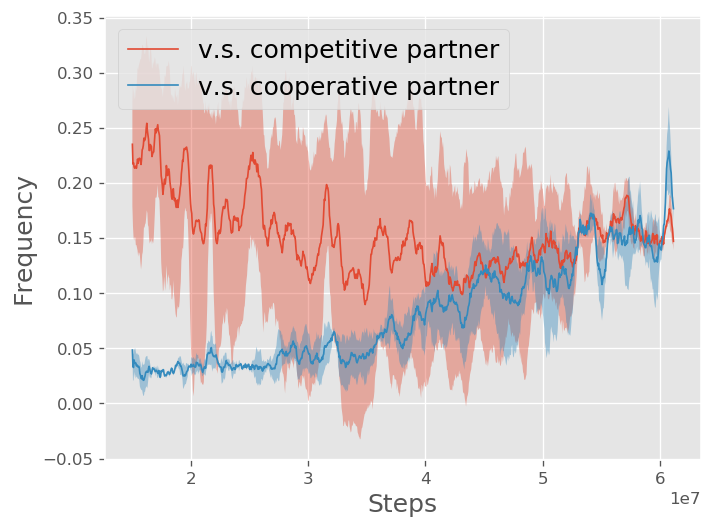}
	}
	\vspace{-2mm}
	\caption{Statistics of adaptation experiments of \agar. (a),(b) illustrate frequencies of \textit{Cooperate} and \textit{Attack} when the adaptive policy was facing different partners.  In (a), we can see that the agent learned to cooperate when the partner was cooperative; In (b), the descend of the "v.s. competitive partner" line at the beginning indicates that the adaptive policy was learning to avoid being exploit; The rising of both lines in the end indicates that the adaptive policy was also learning to exploit its partner.}
	\label{fig:ad}
	\vspace{-5mm}
\end{figure}

\end{document}